\theoremstyle{plain}
\newtheorem{thm}{\protect\theoremname}
\newenvironment{proof}[1][\protect\proofname]{\par
	\normalfont\topsep6\p@\@plus6\p@\relax
	\trivlist
	\itemindent\parindent
	\item[\hskip\labelsep\scshape #1]\ignorespaces
}{%
	\endtrivlist\@endpefalse
}
\providecommand{\proofname}{Proof}
\newtheorem{lemma}{Lemma}
\theoremstyle{definition}
\newtheorem{definition}{Definition}
\providecommand{\theoremname}{Theorem}
\newcommandx{\Richard}[2][1=]{\todo[inline, linecolor=red,backgroundcolor=red!25,bordercolor=red,#1]{Richard: #2}}
\newcommandx{\Ralf}[2][1=]{\todo[inline, linecolor=red,backgroundcolor=red!25,bordercolor=red,#1]{Ralf: #2}}
\newcommandx{\Tim}[2][1=]{\todo[inline, linecolor=red,backgroundcolor=red!25,bordercolor=red,#1]{Tim: #2}}
\newcommandx{\Jan}[2][1=]{\todo[inline, linecolor=red,backgroundcolor=red!25,bordercolor=red,#1]{Jan: #2}}
\newcommandx{\Bernie}[2][1=]{\todo[inline, linecolor=red,backgroundcolor=red!25,bordercolor=red,#1]{Bernie: #2}}
\newcommandx{\TODO}[2][1=]{\todo[inline, linecolor=green,backgroundcolor=green!05,bordercolor=green,#1]{ToDo: #2}}
\newcommand{\0}{\boldsymbol{0}}
\newcommand{\1}{\boldsymbol{1}}
\newcommand{\z}{\mathbf{z}}
\newcommand{\x}{\mathbf{x}}
\newcommand{\y}{\mathbf{y}}
\newcommand{\w}{\mathbf{w}}
\newcommand{\bv}{\mathbf{v}}
\newcommand{\m}{\mathbf{m}}
\newcommand{\I}{\mathbf{I}}
\newcommand{\B}{\mathbf{B}}
\newcommand{\W}{\mathbf{W}}
\newcommand{\V}{\mathbf{V}}
\newcommand{\rep}{\mathbf{r}}
\newcommand{\Perm}{\mathbf{P}}
\newcommand{\PermLayerwise}{\tilde{\mathbf{P}}}
\newcommand{\PermSet}{\mathcal{P}}
\newcommand{\Dataset}{\mathcal{D}}
\newcommand{\varparams}{\boldsymbol{\theta}}
\newcommand{\bDelta}{\boldsymbol{\Delta}}
\newcommand{\argmax}{\mathop{\mathrm{argmax}\,}}
\newcommand{\argmin}{\mathop{\mathrm{argmin}\,}}
\newcommand{\given}{\, \vert \,}
\newcommand{\KL}{\mathrm{KL}}
\newcommand{\ELL}{\mathrm{ELL}}
\newcommand{\divergence}{\, \vert\vert \,}
\newcommand{\kld}[2]{\mathrm{KL}\left[{#1} \divergence {#2}\right]}
\newcommand{\expect}[2]{\mathbb{E}_{#1}\left[{#2}\right]}
\newcommand{\variance}[2]{\mathbb{V}_{#1}\left[{#2}\right]}
\DeclareMathOperator*{\ELBO}{\mathcal{L}_{\mathrm{ELBO}}}
\newcommand{\mix}{\mathrm{mix}}
\newcommand{\qMix}{q_{\mix}}
\newcommand{\like}{\ell}
\newcommand{\likeApp}{g}
\newcommand{\likeAppMix}{g_{\mix}}
\newcommand{\Zmix}{Z_{\mix}}
\newcommand{\lambdascalar}{\hat{\lambda}}
\newcommand{\sigmascalar}{\hat{\sigma}}
\newcommand{\mscalar}{\hat{m}}
\newcommand{\muscalar}{\hat{\mu}}
\newcommand{\wrt}{w.r.t.\@\xspace}
\title{On the detrimental effect of invariances in the likelihood for variational inference}
\author{
		Richard Kurle 
		\thanks{Correspondence to \small{\texttt{kurler@amazon.com}}.}\\
		AWS AI Labs \\
		\small{\texttt{kurler@amazon.com}}
	\And
		Ralf Herbrich \\
		Hasso-Plattner Institut \\
		\small{\texttt{ralf.herbrich@hpi.de}}  
	\And
		Tim Januschowski
		\thanks{Work done while at AWS AI Labs.}\\
		Zalando SE \\
		\small{\texttt{tim.januschowski@zalando.de}}
	\And
		Yuyang Wang \\
		AWS AI Labs \\
		\small{\texttt{yuyawang@amazon.com}}
	\And
		Jan Gasthaus \\
		AWS AI Labs \\
		\small{\texttt{gasthaus@amazon.com}}
}
\begin{document}
\setitemize{noitemsep,topsep=0em,parsep=0pt,partopsep=0pt}
\maketitle
\begin{abstract}
Variational Bayesian posterior inference often requires simplifying approximations such as mean-field parametrisation to ensure tractability. 
However, prior work has associated the variational mean-field approximation for Bayesian neural networks with underfitting in the case of small datasets or large model sizes. In this work, we show that invariances in the likelihood function of over-parametrised models contribute to this phenomenon because these invariances complicate the structure of the posterior by introducing discrete and/or continuous modes which cannot be well approximated by Gaussian mean-field distributions. In particular, we show that the mean-field approximation has an additional gap in the evidence lower bound compared to a purpose-built posterior that takes into account the known invariances. Importantly, this invariance gap is not constant; it vanishes as the approximation reverts to the prior. We proceed by first considering translation invariances in a linear model with a single data point in detail. We show that, while the true posterior can be constructed from a mean-field parametrisation, this is achieved only if the objective function takes into account the invariance gap. Then, we transfer our analysis of the linear model to neural networks. Our analysis provides a framework for future work to explore solutions to the invariance problem.
\end{abstract}

\section{Introduction}
Bayesian neural networks (BNNs) have several appealing advantages compared to deterministic neural networks (NN) such as improving generalization~\citep{Wilson2022Bayesian}, capturing epistemic uncertainty~\citep{Kendall2017}, and providing a framework for continual learning methods~\citep{Kurle2020LLL, Nguyen2018VCL}. Unfortunately, reaping these theoretical benefits has so far been impeded \citep{Wenzel2020How}. In particular, several practical issues with variational Bayesian inference methods---which are the de-facto standard technique for scaling inference in BNNs to large datasets~\citep{Hinton1993a, Blundell2015a}---have been identified to be partially responsible for a performance gap compared to deterministic NNs\citep{Izmailov2021What}. The most common variational approximation of the posterior is a product of independent Normal distributions, commonly referred to as the \emph{mean-field} approximation. It has been observed that mean-field variational BNNs (VBNNs) suffer from severe underfitting for large models and small dataset sizes \citep{Ghosh2018Horseshoe}. Recent work~\citep{Coker2022WideMeanField} showed that under certain assumptions such as an odd Lipschitz activation function and a finite dataset with bounded likelihood, the optimal mean-field approximation \emph{collapses} to the prior as the NN width increases, \emph{ignoring the data}.

The goal of this work is to shed light on \emph{why} the mean-field variational approximation collapses and to provide a new angle for future works to address this shortcoming. To this end, we study invariances in the likelihood function of \emph{overparametrised models} and their detrimental effect on variational inference. For instance, it is well known that NNs exhibit several invariances \emph{with respect to their parameters}, including node permutation invariance \citep{Kurle2021BDL}, sign-flip invariance (in the case of odd activation functions) \citep{Sussmann1992Permutation, Brea2019WeightspaceSI}, scaling invariance (in the case of piece-wise linear activations) \citep{pourzanjani2017identifiability}, and as we will show in Sec.~\ref{subsec:translation_invariance_bnn}, the parameter space of BNNs additionally has subspaces with \emph{translation invariance}. 

Invariance in the likelihood function does not necessarily pose a challenge if maximum likelihood point estimation through stochastic gradient descent is used, because convergence to any of the equivalent optima (resulting from the invariance) suffices for prediction. However, these invariances are detrimental to the variational Bayesian approach. As a first step to show this, we isolate the impact of the invariances in Sec.~\ref{sec:method:constructed_variational}. To this end, we construct both a \emph{mean-field} as well as an \emph{invariance-abiding} approximation which explicitly models the invariances by integrating over all transformations that leave the likelihood invariant. Notably, both aforementioned posterior approximations are constructed from the same (mean-field) \emph{likelihood} approximation. We then prove that, under the conditions outlined in Sec.~\ref{sec:method:constructed_variational}, the mean-field approximation induces the same posterior predictive distribution as the invariance-abiding approximation. However, we also prove that the ELBO objective corresponding to these two approximations differs by the KL divergence between both posterior approximations; we refer to this difference as \emph{invariance gap}. Importantly, the gap vanishes if both distributions are identical, which is the case if the mean-field approximation reverts to the prior. 

We then demonstrate the detrimental effect of invariances in the likelihood function of an overparametrised Bayesian linear regression model in Sec.~\ref{sec:translation_invariance_linear}. This model is purposely selected as the canonical model exhibiting (only) \emph{translation invariance}. We provide a detailed analysis of this model, including a tractable solution for the invariance-abiding approximation. It turns out that the optimal parameters \wrt the ELBO objective with the invariance-abiding distribution result in the true posterior. In contrast, posterior approximations with parameters that are optimal \wrt the mean-field ELBO revert to the prior as the number of dimensions increases. 

Finally, we transfer our analysis of the linear model to VBNNs in Sec.~\ref{subsec:translation_invariance_bnn}, showing that subspaces in BNNs exist that exhibit translation invariance. Combined with a previous analysis of the node permutation invariance (cf.~\citet{Kurle2021BDL} or App.~\ref{app:permutation_invariance_bnns} of the supplementary material), this provides the basis for future work to approximate the (generally intractable) invariance gap and thereby optimise for a tighter and favorable ELBO objective. We start in Sec.~\ref{sec:background} by introducing the basic concepts of VBNNs and recent results on which we build our contribution.
\section{Background}\label{sec:background}
\subsection{Variational Bayesian Neural Networks}\label{sec:background:vbnn}
\paragraph{Neural network functional model.}
Deep NNs are layered models that progressively transform their inputs in each layer. More formally, an $L$--layer NN computes algebraically 
\[
f(\x)=h_L\left(\mathbf{w}_{L,1}^{\text{T}}\mathbf{z}_{L-1}\right)\,,\quad z_{1,i}=h_1\left(\mathbf{w}_{1,i}^{\text{T}}\mathbf{x}\right)\,,\quad z_{l,i}=h_l\left(\mathbf{w}_{l,i}^{\text{T}}\mathbf{z}_{l-1}\right)\,,
\]
where the $h_l:\mathbb{R}\rightarrow\mathbb{R}$ are monotonic transfer functions that introduce non-linearities. 
Such a network has $n_{1}+n_{2}+\cdots+n_{L}$ many nodes $z_{l,i}$, where $i$ indexes the layer-wise vectors $\z_l$. 
These nodes are each a weighted linear combination of either the input vector $\mathbf{x}$ (first hidden layer)
or the value of the hidden units $\mathbf{z}_{l}$ (all other hidden layers and the output $f(\x)$). 
We denote all learnable parameters of the model by the stacked weight vectors of each layer and node, $\w = [\w_{1,1}^{\text{T}}, \dots, \w_{L, n_L}^{\text{T}}]^{\text{T}}$.

\paragraph{Variational Bayesian treatment.}
If the weights and biases $\w$ are treated as random variables with a prior distribution $p(\w)$, then the posterior $p(\w \given  \Dataset)$---induced by the dataset $\Dataset$ through the likelihood function $\like(\w; \Dataset) := p(\Dataset \given  \w)$ that is defined via $f(\x)$---is referred to as a Bayesian neural network (BNN). The variational Bayesian method approximates the posterior by a distribution $q_{\varparams}(\w) \approx p(\w \given  \Dataset)$ with variational parameters $\varparams$, casting inference as an optimization problem
\begin{equation}\label{eq:variational_optimisartion_kl}
q_{\varparams}^{*}(\w) = 
	\argmin_{q_{\varparams} \, \in \, \mathcal{Q}} 
	\kld{q_{\varparams}}{p(\cdot \given \Dataset)},
\end{equation}
where $\kld{q_{\varparams}}{p(\cdot \given \Dataset)}:=\expect{\w \sim q_{\varparams}}{\ln q_{\varparams}(\w) - \ln p(\w | \Dataset)}$ and $\mathcal{Q}$ is a family of distributions over~$\w$. The optimization of~\eqref{eq:variational_optimisartion_kl} is achieved by maximising a lower bound to the (log) model evidence $\ln p(\Dataset)$,
\begin{equation}\label{eq:elbo}
\begin{split}
\ELBO\left(q_{\varparams}, \mathcal{D} \right) 
= \ln p(\Dataset) - \kld{q_{\varparams}}{p(\cdot \given \Dataset)} 
&= \expect{\w \sim q_{\varparams}}{\ln \frac{\like(\w; \Dataset) \, p(\w)}{q_{\varparams}(\w)}}  \\
&= \ELL \left(q_{\varparams},\, \mathcal{D} \right) - \KL \left[q_{\varparams} \divergence p \right],
\end{split}
\end{equation}
where $\ELL \left(q_{\varparams},\, \mathcal{D} \right) := \expect{\w \sim q_{\varparams}}{\ln \like(\w; \Dataset)}$ is the expected log-likelihood. 
\begin{definition}[Mean-field variational BNN] A \emph{mean-field} variational BNN (VBNN) is the minimizer of \eqref{eq:variational_optimisartion_kl} where $\mathcal{Q}$ is the family of Gaussians with diagonal covariance matrix.
\end{definition}
\subsection{Data-related bound on the KL divergence}\label{sec:data_bound}
The term $\kld{q_{\varparams}}{p}$ in~\eqref{eq:elbo} admits a data-dependent upper bound that naturally occurs as a consequence of the finite information provided by a finite dataset in the presence of noise. This result has been shown in~\citep{Coker2022WideMeanField} and we recall the result for a Gaussian likelihood with homogeneous noise (for other likelihood functions, we refer to App.\ F in \cite{Coker2022WideMeanField}). 

Assume a VBNN with Gaussian observation noise defined through $y =f_{\w}(\x) + \epsilon$, with $\epsilon \sim \mathcal{N}(0, \sigma^2_y)$, an isotropic Gaussian prior $p(\w) = \mathcal{N}(\w; \0, \I)$, and a mean-field variational approximation $q_{\varparams}(\w) = \mathcal{N}(\w; \mathbf{m}, \mathrm{Diag}(\mathbf{v}))$. Define the NN output variance $\sigma^2_{L}\left(\x^{(n)}\right) := \variance{\w \sim p}{f(\x^{(n)};\, \w)}$ for some fixed input $\x^{(n)}$, where $L$ indicates the last layer. Then,
\begin{align}\label{eq:bound_kl_regression}
\begin{split}
\kld{q_{\varparams}}{p} 
&\leq  \ELL\left(q^*, \mathcal{D} \right) - \ELL\left(p, \mathcal{D} \right) 
= \sum_{n=1}^{N} \frac{\sigma^2_{L} \big(\x^{(n)}\big) + \big(y^{(n)}\big)^{2}}{2 \sigma^2_y},
\end{split}
\end{align}
where $q^*$ is a hypothetical approximation that predicts the data perfectly up to the noise variance $\sigma^2_y$ (see App.~\ref{app:kl_bound} for details). Prior work~\cite{Coker2022WideMeanField} has used the data-related bound to prove \emph{that} the predictive distribution of mean-field BNNs converges to the prior predictive distribution if the network width is large and the activation function is odd, ultimately resulting in posterior collapse to the prior. In contrast, we address the question \emph{why} the mean-field approximation cannot approximate the posterior by relating this data-related bound to invariances in neural network functions in the following section.

\section{Invariance-abiding variational approximation}\label{sec:method:constructed_variational}
We develop a framework that enables modelling the invariances in the likelihood and understanding their impact on the ELBO objective. To achieve this, we approximate the likelihood by a Gaussian function with variational parameters and marginalise over all transformations to which the true likelihood is invariant. By taking the product with the prior, we construct an \emph{invariance-abiding} posterior approximation $\qMix$ which can be related to a \emph{mean-field} approximation $q_0$ that does not model invariances. We then describe conditions under which both approximations yield an identical posterior predictive, while the KL regularisation term of $q_0$ is lower bounded by the KL of $\qMix$. 

\paragraph{Variational likelihood and posterior approximations.}
Non-identifiability implies that the posterior does not concentrate on a single set of parameters irrespective of the dataset size, because the same likelihood is assigned to different parameter values \citep{Gelman2017PriorLikelihood}.
We model this invariance through transformations $t(\cdot, \rep)$ to which the likelihood $\like(\w; \Dataset) = p(\Dataset \given \w)$ is invariant via variables $\rep$:
\begin{align}\label{eq:likelihood_invariance:definition}
\forall \rep\sim p(\rep)&: ~~~\like(t(\w, \rep); \Dataset)) = \like(\w; \Dataset). 
\end{align}
From \eqref{eq:likelihood_invariance:definition} it follows that the likelihood is also invariant \wrt the marginalisation 
\begin{equation*}
\like \left(\w; \Dataset\right) 
= \expect{\rep \sim p}{ \like \left( t \left(\w, \rep \right); \Dataset \right)}.
\end{equation*}
We assume that each of the equivalent parametrisations $\w^{\prime} = t(\w, \rep)$ of the likelihood has the same probability a priori. For discontinuous transformations such as node permutations (see~App.~\ref{app:permutation_invariance_bnns}), $p(\rep)$ is a uniform distribution over discrete variables indexing these transformations. For the continuous translation invariance (see Sec.~\ref{sec:translation_invariance_linear}), we model the uniform distribution as a Gaussian with infinite variance. It is conceivable that the likelihood can be constructed by marginalising over these transformations of a simpler function $\like_{0}(\w; \Dataset)$ such that $\like(\w; \Dataset) = \expect{\rep \sim p}{\like_{0}(t(\w, \rep); \Dataset)}$. For instance, permutation invariance induces a factorial number of discontinuous modes that are each equivalent (cf.~\cite{Kurle2021BDL}, App.~\ref{app:permutation_invariance_bnns}); and as we show in Sec.~\ref{sec:translation_invariance_linear}, the full covariance Gaussian likelihood and posterior of an over-parametrised Bayesian linear regression model can be constructed from a product of independent Gaussians. Curiously, it may be sufficient to approximate $\like_0$ while taking into account the known invariances. To this end, we define a Gaussian \emph{variational likelihood approximation} $\likeApp_0(\w; \varparams) \approx \like_0(\w, \Dataset)$, with variational parameters $\varparams$. From this single mode approximation, we model an \emph{invariance-abiding likelihood} approximation using the same parametrisation through
\begin{align}\label{eq:mixture_likelihood_approximation}
\begin{split}
\likeAppMix(\w; \varparams) 
&:= \expect{\rep \sim p}{\likeApp_0(t(\w, \rep); \varparams)} 
\,\approx \, \expect{\rep \sim p}{\like_0(t(\w, \rep), \Dataset)} = \like(\w; \Dataset).
\end{split}
\end{align}
We consider mean-field Gaussians for the prior $p(\w)$ and likelihood approximation $\likeApp_0(\w; \varparams)$, where $\varparams = \{\mathbf{m}, \boldsymbol{\lambda} \}$ are means and variances of $\likeApp_0$. We then define a \emph{mean-field posterior} as the product
\begin{subequations}\label{eq:constructed_approximation}
\begin{align}\label{eq:constructed_approximation:mean_field}
q_0(\w; \varparams) &:= Z_0^{-1}\,  p(\w) \cdot \likeApp_0(\w; \varparams), ~~~~~  Z_0 = \int p(\w) \cdot \likeApp_0(\w; \varparams) d\w.
\end{align}
Similarly, we define an \emph{invariance-abiding posterior} as the product of prior and invariant likelihood:
\begin{align}\label{eq:constructed_approximation:invariance_abiding}
\qMix(\w; \varparams) &:= \Zmix^{-1}\, p(\w) \cdot \likeAppMix(\w; \varparams), ~~~~~  \Zmix = \int p(\w) \cdot \likeAppMix(\w; \varparams) d\w,
\end{align}
\end{subequations}
where $Z_0$ and $\Zmix$ are normalisation constants. While $q_0(\w; \varparams)$ is a mean-field approximation, the invariance-abiding approximation $\qMix(\w; \varparams)$ is a mixture with infinite continuous or finite discrete modes depending on the type of invariance. We will describe continuous translation invariance in Sec.~\ref{sec:translation_invariance_linear}; for the discrete node permutation invariance, see App.~\ref{app:permutation_invariance_bnns} of the supplementary material.

\paragraph{Posterior predictive equivalence.} 
Next, we discuss conditions under which we can construct approximations $\qMix(\w; \varparams)$ and $q_0(\w; \varparams)$ that yield the same predictive distribution. We first write the density $\qMix$ as an expectation of product densities,
\begin{align*}
\begin{split}
\qMix(\w; \varparams)
= \Zmix^{-1} \, p(\w)  \cdot \int p(\rep) \, \likeApp_0(t(\w, \rep); \varparams) d\rep 
&= \int p(\rep) \, \frac{p(\w) \cdot \likeApp_{0}(t(\w, \rep); \varparams)}{\Zmix} d\rep.
\end{split}
\end{align*}
Then, we assume that there exists a mapping $\rep^{\prime} = \varphi(\rep)$ such that
\begin{align}
\begin{split}
\forall \rep\sim p(\rep):  p(\w) \cdot \likeApp_0(t(\w, \rep); \varparams) = p(t(\w, \varphi(\rep))) \cdot \likeApp_0(t(\w, \varphi(\rep)); \varparams). \label{eq:condition_1}
\end{split}
\end{align}
The condition in \eqref{eq:condition_1} allows us to exploit the invariance property of the likelihood (approximation) also for each product density $q_0(t(\w, \varphi(\rep)); \varparams) \propto p(t(\w, \varphi(\rep))) \cdot \likeApp_0(t(\w, \varphi(\rep)); \varparams)$, because then
\begin{align*}
\qMix(\w; \varparams) = \int \frac{Z_0(\rep)}{\Zmix} p(\rep) \cdot q_0(t(\w, \varphi(\rep)); \varparams) d\rep,
\end{align*}
where the normalisation constants are $Z_0(\rep) = \int p(\w) \cdot \likeApp_0(t(\w, \varphi(\rep)); \varparams) d\w$, and $\Zmix$ is defined in \eqref{eq:constructed_approximation:invariance_abiding}. 
We show that the condition in \eqref{eq:condition_1} holds for translation and permutation invariance in App.~\ref{app:invaraiance:gap_and_equivalence}.

The second condition is that all invariance transformations $t(\cdot, \rep)$ must be volume-preserving, i.e.\
\begin{equation}\label{eq:condition_2}
\forall \rep\sim p(\rep): ~\left|\det \frac{\partial t(\w, \rep)}{\partial \w} \right|^{-1} = 1. 
\end{equation}
Although \eqref{eq:condition_1} and \eqref{eq:condition_2} are fairly restricting, common invariances such as translation and permutation invariance fulfil these conditions for Gaussian priors and likelihood approximations (see App.~\ref{app:invaraiance:gap_and_equivalence}). With \eqref{eq:condition_1} and \eqref{eq:condition_2}, we can show the posterior predictive equivalence (see Lemma \ref{lem:equivalence_of_mixture_posterior} in App.~\ref{app:invaraiance:gap_and_equivalence})
\begin{align}\label{eq:posterior_predictive_identical}
\expect{\w \sim \qMix}{\ln p(\Dataset \given \w) }
&= \expect{\w \sim q_0}{\ln p(\Dataset \given \w) }.
\end{align}

\paragraph{Invariance gap.}
If the conditions in \eqref{eq:condition_1} and \eqref{eq:condition_2} are met, we also show (see Lemma \ref{lem:kdl_equality_invariance_gap} in App.~\ref{app:invaraiance:gap_and_equivalence})
\begin{align}\label{eq:invariance_gap}
\ELBO\left(q_0, \Dataset \right) - \ELBO\left(\qMix, \Dataset \right)
= \kld{q_0}{p} - \kld{\qMix}{p}
= \kld{q_0}{\qMix}.
\end{align}
Interestingly, the gap between the two respective ELBO objectives is given exactly by the relative entropy between the mean-field and invariance-abiding approximation; we therefore refer to it as the \emph{invariance gap}. We make the following observation about the detrimental effect of the invariances: when maximising the standard ELBO $\ELBO\left(q_0, \Dataset \right)$ instead of the tighter objective $\ELBO\left(\qMix, \Dataset \right)$ \wrt the parameters of the variational \emph{likelihood} approximation $g_0(\w; \varparams)$ (used to construct both $q_0$ and $\qMix$), we see that the former objective favors solutions where $q_0$ and $\qMix$ coincide. This suboptimal solution is obtained if the mean-field posterior $q_0(\w)$ and thus also the invariance-abiding posterior $\qMix(\w)$ revert to the prior. This is the case if $\likeApp_0(\w)$ is uniform, because then $\likeAppMix(\w)$ is uniform as well, and because both posteriors are constructed as the product of prior and likelihood approximation (cf.~\eqref{eq:constructed_approximation:mean_field}). 

\paragraph{Implication of data-related bound.}
Since $\kld{q_{0}}{p}$ is upper bounded by the best-~and worst-case ELL as described in Sec.~\ref{sec:data_bound}, the invariance gap is also bounded (using~\eqref{eq:invariance_gap} and~\eqref{eq:bound_kl_regression}): 
\begin{align}\label{eq:bound_invariance_gap}
\kld{q_{0}}{\qMix} 
\leq \kld{q_{0}}{p} 
\leq  \ELL\left(q^*, \mathcal{D} \right) - \ELL\left(p, \mathcal{D} \right).
\end{align}
Consequently, this bound puts a constraint on the achievable solutions to the maximisation of the ELBO objective \wrt the variational parameters of the mean-field approximation $q_{0}(\w; \varparams)$. As discussed above, one specific parametrisation for which the invariance gap vanishes is the mean-field posterior collapse, $q_{0}(\w; \varparams) \approx p(\w)$. However, to show that the invariance gap not only admits posterior collapse as a potential parametrisation but indeed incentivises it, we next tackle the question how the invariance gap behaves for non-collapsed approximations. One particularly relevant variational parametrisation is the optimal solution \wrt the ELBO objective with the invariance-abiding approximation. In order to tackle this question, we now consider a simple model for which the relevant distributions and the invariance gap can be computed exactly.

\section{Translation invariance in linear models}\label{sec:translation_invariance_linear}
We study an over-parametrised Bayesian linear regression model as the canonical model that exhibits \emph{translation invariance.} The model serves as a useful tool to understand the detrimental effect of translation invariance since all interesting quantities can be computed analytically, incl.\ the mean-field and invariance-abiding distribution defined in \eqref{eq:constructed_approximation} and the invariance gap from \eqref{eq:invariance_gap}. We show how to lift results from this canonical model to the more general case of NNs in Sec.~\ref{subsec:translation_invariance_bnn}. 

\paragraph{Likelihood model.} 
Consider a linear model with $K$ latent variables $\w=\left[w_{1}, \ldots, w_{K}\right]^{\text{T}}$ and assume that we have $N$ observations of variables $y$ given dependent inputs $\x$. To simplify the setting, we will assume that $\x = \1$ (see App.~\ref{app:canonical_translation_invariant_model} for the more general case). We further assume that the observation depends only on the inner product with the inputs and additive Gaussian noise. That is, 
$
y = \frac{1}{K} \1^{\text{T}} \w + \epsilon, ~~~ \epsilon \sim \mathcal{N}\left(0, \sigma^2_y \right).
$
Note that any change in $\w$ which leaves the sum of the elements unaffected does not change the likelihood.
In general, we can model this translation invariance using a $K-1$ dimensional vector $\boldsymbol{\Delta}\in\mathbb{R}^{K-1}$ and observing that
\begin{align}\label{eq:B_invariance}
\1^{\text{T}}\w & =\1^{\text{T}}\left(\mathbf{w+}\mathbf{B}\boldsymbol{\Delta}\right)\,,
~~~
\mathbf{B} :=\left[\begin{array}{c}
\mathbf{I}\\
-\1^{\text{T}}
\end{array}\right]\,,
\end{align} 
since $\1^{\text{T}}\mathbf{B}\boldsymbol{\Delta}=\0$. This over-parametrised model exhibits translation invariance $t(\w, \Delta) = \w - \mathbf{B} \Delta.$

\paragraph{Prior.} 
We assume a Gaussian prior $p\left(\w\right)=\mathcal{N}\left(\w;\boldsymbol{\mu},\boldsymbol{\Sigma}\right)$ with diagonal covariance $\boldsymbol{\Sigma} = \text{Diag} \left( \boldsymbol{\sigma}^2  \right)$, where we consider $\boldsymbol{\sigma}^2 = K \cdot \sigma^2_0 \cdot \1$ proportional to the number of dimensions $K$, such that the predictive variance $\mathbb{V}\left[\frac{1}{K} \sum_k {w}_k + \epsilon \right]$ is constant \wrt $K$. Another way to view this is to model a prior over parameters that induces the same prior over functions for different $K$. 

\paragraph{Posterior.} 
The posterior of this Gaussian linear model can be computed as (see~App.~\ref{app:canonical_translation_invariant_model:true_post_and_abiding})
\begin{align}\label{eq:translation:true_posterior}
\begin{split}
p(\w \given \y) &= \mathcal{N} \left(\w; \mathbf{m}_p^*, \mathbf{V}_p^* \right), ~~~
\mathbf{V}_p^* = \left( \frac{N}{K^2 \sigma^2_y} \1 \1^{\text{T}} + \boldsymbol{\Sigma}^{-1} \right)^{-1},  ~~~
\mathbf{m}_p^* = \mathbf{V}_p^* \frac{\sum_{i} y_i}{K \sigma^2_y} \1 \, .
\end{split}
\end{align}
As can be seen, the resulting posterior has full covariance with a diagonal plus rank-1 matrix. We will now show that we can construct this posterior from the prior and a mean-field Gaussian approximation of the likelihood by marginalising over all translations $\Delta \sim p(\Delta)$ to which the likelihood is invariant.

\subsection{Mean-field parametrisation of the likelihood function}\label{subsec:mean_field_parameterisation}
Next, we construct the mean-field and invariance-abiding posterior approximations defined in \eqref{eq:constructed_approximation} from the prior defined above and the mean-field likelihood approximation with locations~$\mathbf{m}$ and variances~$\boldsymbol{\lambda}$. We model that the likelihood is translation invariant by computing the marginal from \eqref{eq:mixture_likelihood_approximation} and considering $p(\Delta) = \mathcal{N}\left(\boldsymbol{\Delta};\0,\beta^{2}\mathbf{I}\right)$ with $\beta \rightarrow \infty$ as the uniform distribution over all translations:
\begin{align*}
\likeAppMix \left(\w;\varparams \right) 
&= 
\int \likeApp_0\left(t(\w, \Delta); \varparams \right) \cdot p\left(\boldsymbol{\Delta}\right)\,d\boldsymbol{\Delta}\\
&= \lim_{\beta \rightarrow \infty} \int\mathcal{N}\left(\w;\mathbf{m}+\mathbf{B}\boldsymbol{\Delta}, \, \text{Diag}\left(\boldsymbol{\lambda} \right) \right)\cdot\mathcal{N}\left(\boldsymbol{\Delta};\0,\beta^{2}\mathbf{I}\right)\,d\boldsymbol{\Delta}
~=:~ \mathcal{N}\left(\w;\mathbf{m}_{\mix}, \mathbf{V}_{\mix}\right)
\,.
\end{align*}
Writing the uniform distribution as the limiting case of a Gaussian with infinite variance allows us to compute the integral analytically. As shown in App.~\ref{app:canonical_translation_invariant_model}, the resulting function is a multivariate Gaussian with a degenerate rank-1 covariance matrix and the same location parameter as $\likeApp_0(\w)$:
\begin{align}
\mathbf{m}_{\mix} = \mathbf{m}, ~~~~~
\mathbf{V}_{\mix}^{-1} 
=\frac{1}{\1^{\text{T}}\boldsymbol{\lambda}}\1\1^{\text{T}}\,. \label{eq:translation:likelihood_precision}
\end{align}
However, the invariance-abiding posterior is non-degenerate, and it can be computed efficiently as
\begin{align}\label{eq:translation:posterior:invariance_abiding}
\begin{split}
\qMix \left(\w; \, \varparams \right) 
&= \mathcal{N}\left(\w; \,
\boldsymbol{\mu} + \frac{\1^{\text{T}} \left( \mathbf{m} - \boldsymbol{\mu} \right)}{\1^{\text{T}} \left( \boldsymbol{\lambda} + \boldsymbol{\sigma}^2 \right)} \boldsymbol{\sigma}^2, \,
\text{Diag}\left(\boldsymbol{\sigma}^2\right) - \frac{1}{\1^{\text{T}}\left(\boldsymbol{\lambda} + \boldsymbol{\sigma}^2\right)}\cdot\boldsymbol{\sigma}^2\left(\boldsymbol{\sigma}^2\right)^{\text{T}} \right) 
.
\end{split}
\end{align}
As can be seen, this covariance matrix is not diagonal; it has an additive rank-1 term that vanishes as $\lambda \rightarrow \infty$. Interestingly, the location parameter of $\qMix$ is translated from the \emph{prior} location $\boldsymbol{\mu}$ along the direction of the \emph{prior} variance vector $\boldsymbol{\sigma}^2$. This is because $\likeAppMix$ is uniform along the $K-1$ dimensions hyper-plane determined by its normal vector $\mathbf{1}$. Taking the Gaussian product then translates the location in the direction $\text{Diag}(\boldsymbol{\sigma}^2) \mathbf{1} = \boldsymbol{\sigma}^2$. Note also that the invariance-abiding posterior can be written in the same form as the true posterior in \eqref{eq:translation:true_posterior} (see App.~\ref{app:canonical_translation_invariant_model:true_post_and_abiding}). The optimal invariance-abiding posterior is thus the true posterior. We visualise the two respective posterior approximations and the corresponding likelihood in Fig.~\ref{fig:translation:gaussian_approximations} of App.~\ref{app:canonical_translation_invariant_model} with two different parametrisations (cf.~Sec.~\ref{sec:translation:linear:comparison_optimal}).

\subsection{Invariance gap}
To quantify the detrimental effect of the translation invariance in the considered linear model we now compute the invariance gap in \eqref{eq:bound_invariance_gap}. Note again that we assume a scaled standard normal prior with variance $\boldsymbol{\sigma}^2 = K \sigma^2_0 \cdot \1$. We simplify the form of the KL divergence by assuming that all variances and means take the same value in the likelihood (and thus also in the posterior) approximation, i.e.\
\begin{align*}
\forall k: 
\lambda_k = \lambdascalar, \, 
m_k = \mscalar, \, 
\sigma_k = \sigmascalar, \, 
\mu_k = \muscalar. \, 
\end{align*}
This choice is motivated by the fact that the posterior approximation is a function of the sum $\1^{\text{T}} \boldsymbol{\lambda}$ only (cf.~\eqref{eq:translation:posterior:invariance_abiding}), and, hence, it makes no difference for $\qMix$. However, since the prior variance is proportional to $\1$, the Gaussian likelihood resulting in the highest ELBO for the approximation $q_0$ also has a variance vector proportional to $\1$. Using this assumption, the invariance gap is
\begin{align}\label{eq:translation:gap_kl}
\begin{split}
\kld{q_0}{\qMix}
&= \frac{K-1}{2} \left[ 
\ln \left(\frac{\sigmascalar^2 +\lambdascalar}{\lambdascalar}\right)
+ \frac{\lambdascalar}{\sigmascalar^2 + \lambdascalar} - 1
\right].
\end{split}
\end{align}
This is a convex function in $\phi=\frac{\sigmascalar^2 + \lambdascalar}{\lambdascalar}$ with a minimum at $\phi=1$; it is minimised as $\lambdascalar \rightarrow \infty$. It is however not evident whether the gap has a large magnitude compared to the rest of the regularisation term and over-regularises in practice. In the next section, we therefore analyse this term at the optima for the mean-field and the invariance-abiding approximations defined in \eqref{eq:canonical_transinvariant:optimum}. The corresponding invariance gaps are visualised in Fig.~\ref{fig:translation:gaps} where it can be seen that the invariance gap grows linearly when using the optimal parameters of the invariance-abiding distribution and the unattainable data-related bound is reached quickly, thus preventing this optimum if \eqref{eq:canonical_transinvariant:optimum:0} is optimised instead of~\eqref{eq:canonical_transinvariant:optimum:inf}.

\subsection{Optimal mean-field and invariance-abiding parametrisations}\label{sec:translation:linear:comparison_optimal}
To better understand the detrimental effect of the invariance gap, we now analyse the ELL and KL terms of the ELBO objective for the mean-field and invariance-abiding variational approximations, respectively. We compare these terms for both distributions with the parameters optimized for both posterior distributions, giving $2 \times 2$ combinations. We denote the optimal parameters \wrt the invariance-abiding approximation and the mean-field approximation, respectively, as
\begin{subequations}\label{eq:canonical_transinvariant:optimum}
\begin{align}
\label{eq:canonical_transinvariant:optimum:inf}
\varparams_{\mix}^* &= \argmax_{\varparams} \ELBO \left( q_{\mix}\left(\cdot\,;\, \varparams \right), \mathcal{D}  \right),
\\
\label{eq:canonical_transinvariant:optimum:0}
\varparams_{0}^{*} &= \argmax_{\varparams} \ELBO \left( q_{0}\left(\cdot\,;\, \varparams\right), \mathcal{D}  \right).
\end{align}
\end{subequations}
Perhaps the simplest way compute these optimal parameters is to notice that $\qMix$ in \eqref{eq:translation:posterior:invariance_abiding} can be written in the same form as the true posterior and then simply read out the optimal parameters. This is shown in App.~\ref{app:canonical_translation_invariant_model:true_post_and_abiding}; the resulting optimal variational parameters are
\begin{align}\label{eq:translation:optimal_invariant}
\begin{split}
\likeApp_0(\w \,;\, \varparams_{\mix}^*) &= \mathcal{N} \left(\w; \mathbf{m}_{\mix}^*, \text{Diag}\left(\boldsymbol{\lambda}_{\mix}^* \right) \right), ~~~
\mathbf{m}_{\mix}^* = \frac{1}{N} \sum_{n=1}^{N} y^{(n)} \cdot \1,  ~~~
\boldsymbol{\lambda}_{\mix}^* = \frac{K \sigma^2_y}{N} \cdot \1.
\end{split}
\end{align}
The optimal parameters for the mean-field \emph{posterior} can also be computed analytically, since the Gaussian mean-field distribution that minimises the KL divergence to the multivariate Gaussian true posterior is known \cite{wainwright2008graphical}. 
The resulting optimal parameters of the mean-field \emph{likelihood} are
\begin{align}\label{eq:translation:optimal_mf}
\begin{split}
\likeApp_{0}(\w \,;\, \varparams_{0}^*) &= \mathcal{N} \left(\w; \mathbf{m}_{0}^*, \text{Diag}\left(\boldsymbol{\lambda}_{0}^* \right) \right), ~~~
\mathbf{m}_{0}^* = \mathbf{m}_{\mix}^*, ~~~
\boldsymbol{\lambda}_{0}^* = \frac{K^2 \sigma^2_y}{N} \, \1.
\end{split}
\end{align}
As can be seen, the two respective optimal parameters differ by the factor $K$ in the likelihood variance. We then construct the two respective posterior approximations $q_0$ and $\qMix$ with these two optimal parameters and compare the $2\times2$ combinations in Fig.~\ref{fig:translation:loss_terms}, where we visualise the ELBO terms.

Note again that we model prior variances ${\boldsymbol{\sigma}^2 = K \sigma^2_0 \cdot \1}$ such that prior and posterior over functions are identical for any $K$. Due to this choice and since $\qMix(\w; \varparams_{\mix}^*)$ approximates the true posterior exactly, it does not suffer from over-parametrisation. This can be seen by the loss terms in Fig.~\ref{fig:translation:loss_terms} being constant in $K$. In contrast, since $\boldsymbol{\lambda}_0^{*}$ depends quadratically on $K$ and $\boldsymbol{\sigma}^2$ only linearly, $\qMix(\w; \varparams_{0}^*)$ \emph{collapses} to the prior as $K \rightarrow \infty$. This can be seen e.g.\ by the shrinking KL regularisation (Fig.~\ref{fig:translation:kl}) and ELL (in \ref{fig:translation:ell}) term. As a consequence, and in line with \citet{Coker2022WideMeanField}, the posterior predictive variance of the optimal mean-field approximation reverts to the prior predictive variance (Fig.~\ref{fig:translation:pred_var}).

We have shown that---in the simplified setting of a linear model with dependent observations---the consequence of not handling translation invariance is indeed posterior collapse as $K \rightarrow \infty$, while modelling the invariance coincides with the true posterior. An important observation and key takeaway is that, while we need to optimise for \eqref{eq:canonical_transinvariant:optimum:inf}, the mean-field posterior approximation $q_0(\w; \varparams_{\mix}^*)$ is sufficient for prediction. Indeed, for the linear model, we could compute the invariance gap exactly and thereby correct the ELBO objective for the invariances of this model. For more complex non-linear models such as BNNs, this section may serve as a direction for approximating the invariance gap. To this end, the next section describes the layer-wise translation invariance exhibited by BNNs. 
\begin{figure}\centering
    \begin{subfigure}[t]{0.485\textwidth}\centering
		\includegraphics[width=\textwidth]{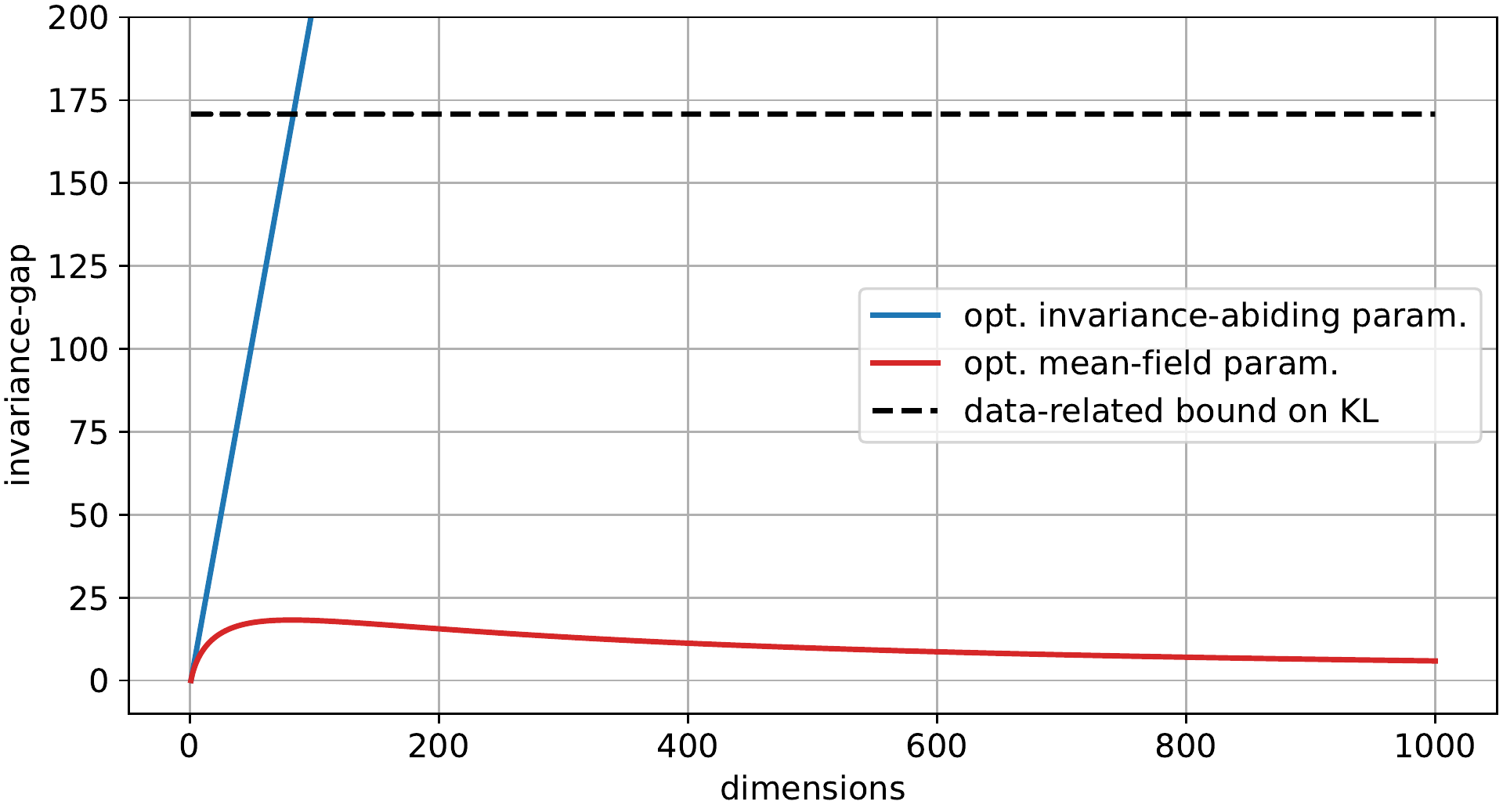}
		\caption{$N=10$ observations.}
			\label{fig:translation:gap:1}
	\end{subfigure}
	\hfill
	\begin{subfigure}[t]{0.485\textwidth}\centering
		\includegraphics[width=\textwidth]{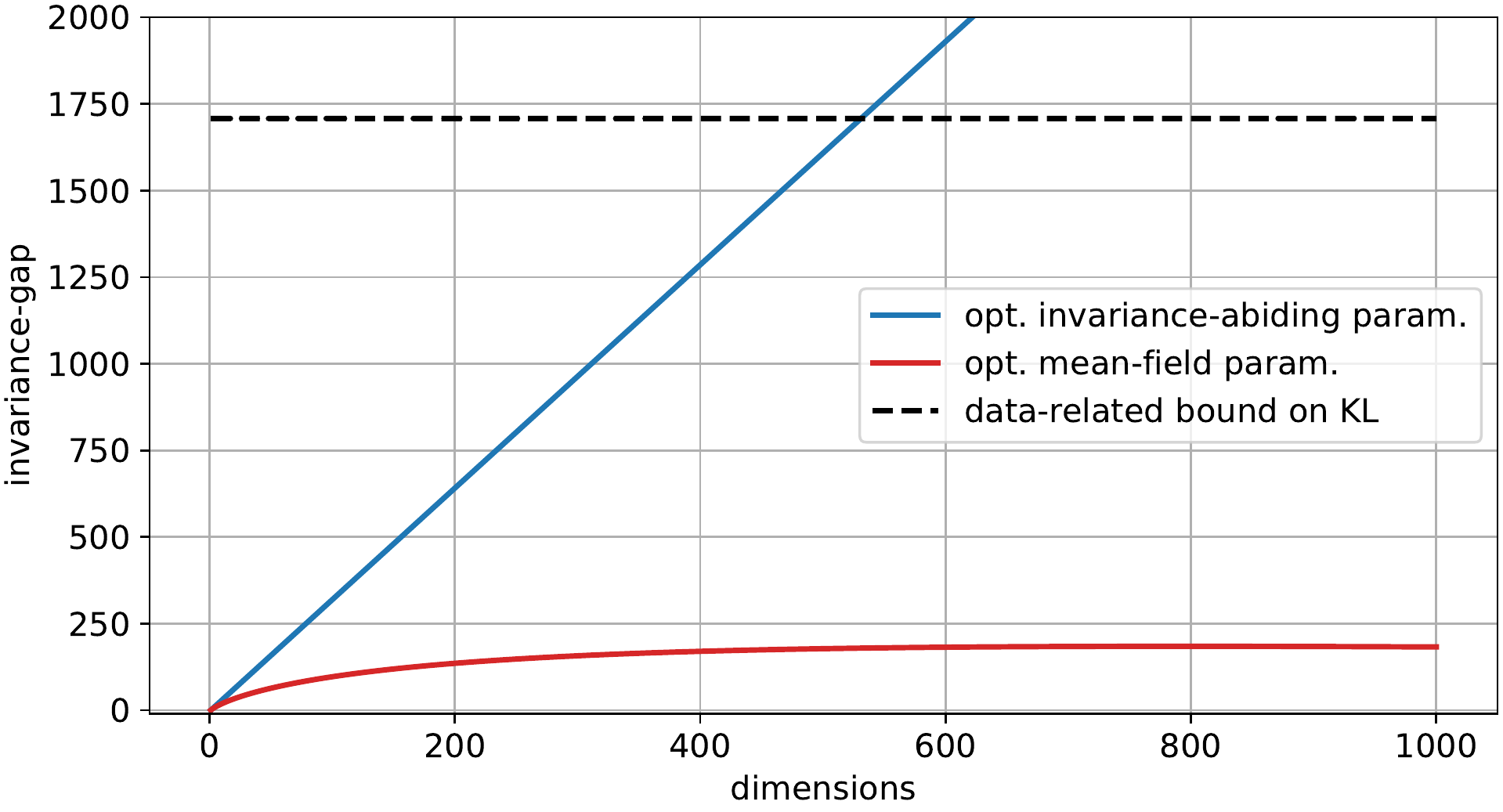}
	\caption{$N=100$ observations.}
	\label{fig:translation:gap:100}
	\end{subfigure}
    \caption{
    Invariance gap evaluated at different optima (cf.~\eqref{eq:canonical_transinvariant:optimum}), i.e.\ $\kld{q_0(\cdot;\, \varparams_\mix^*)}{\qMix(\cdot;\, \varparams_\mix^*)}$ and $\kld{q_0(\cdot;\, \varparams_0^*)}{\qMix(\cdot;\, \varparams_0^*)}$. 
    Prior variances are ${\sigma^2 = K \cdot \1}$, where $K$ are the dimensions; 
    the noise variance is $\sigma^2_y = \left(2 \pi \mathrm{e}\right)^{-1}$; 
    all observations $y=1$ and $\x = \1$ are identical.
    As $K$ increases, the invariance gap vanishes in case of the optimal parameters $\varparams_0^*$.  
    In contrast, the gap for $ \varparams_\mix^*$, which induces the true posterior predictive, grows linearly. 
    As the data-related bound (cf.~Sec.~\ref{sec:data_bound}) can not be exceeded by the optimal parameters, $\varparams_{0}^{*}$ can not coincide with the optimal $\varparams_{\mix}^{*}$.
    }
    \label{fig:translation:gaps}
\end{figure}
\begin{figure}\centering
    \begin{subfigure}[t]{0.485\textwidth}\centering
		\includegraphics[width=\textwidth]{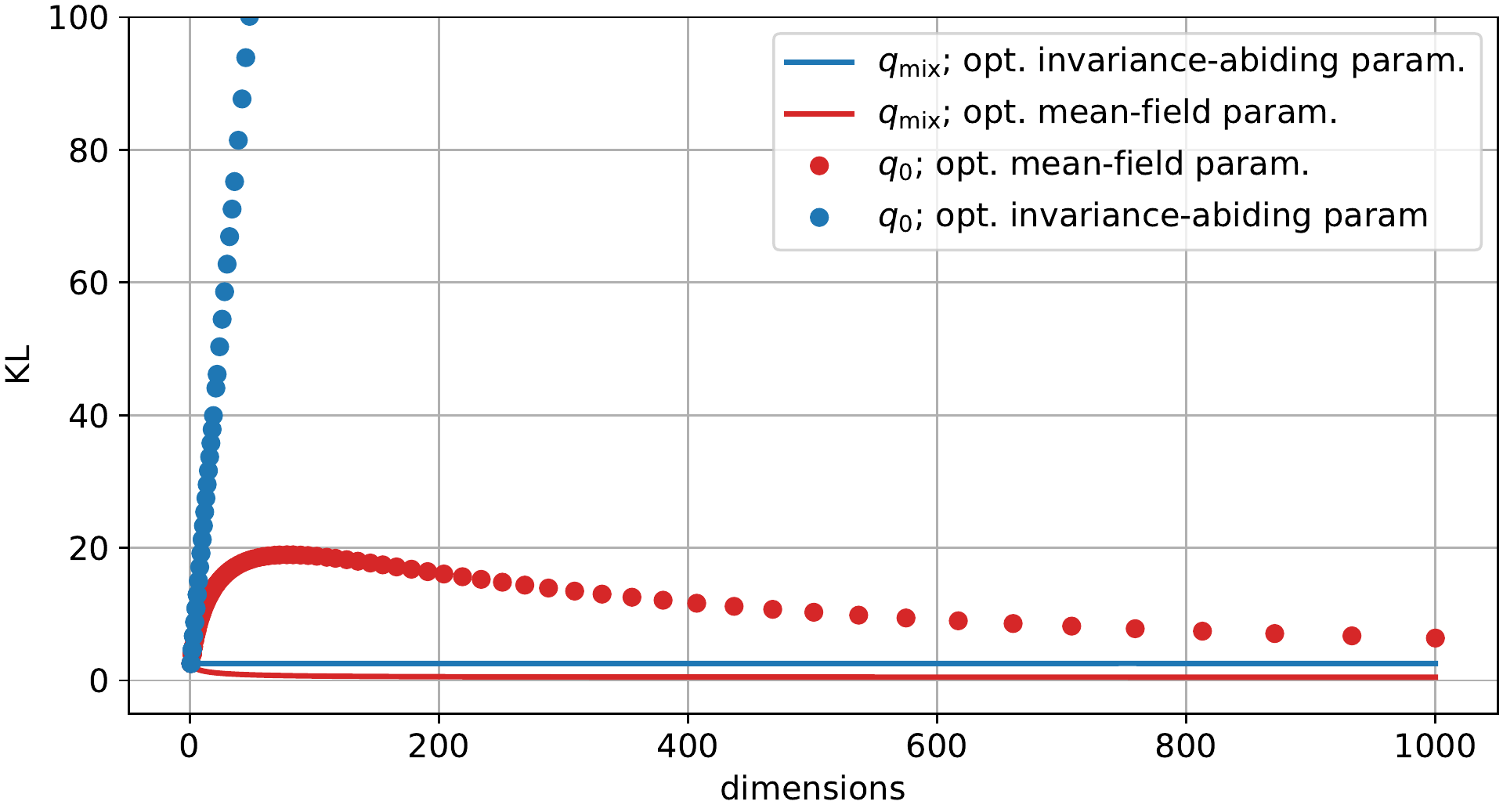}
		\caption{$\kld{q}{p}$.}
		\label{fig:translation:kl}
	\end{subfigure}
	\hfill
	\begin{subfigure}[t]{0.485\textwidth}\centering
		\includegraphics[width=\textwidth]{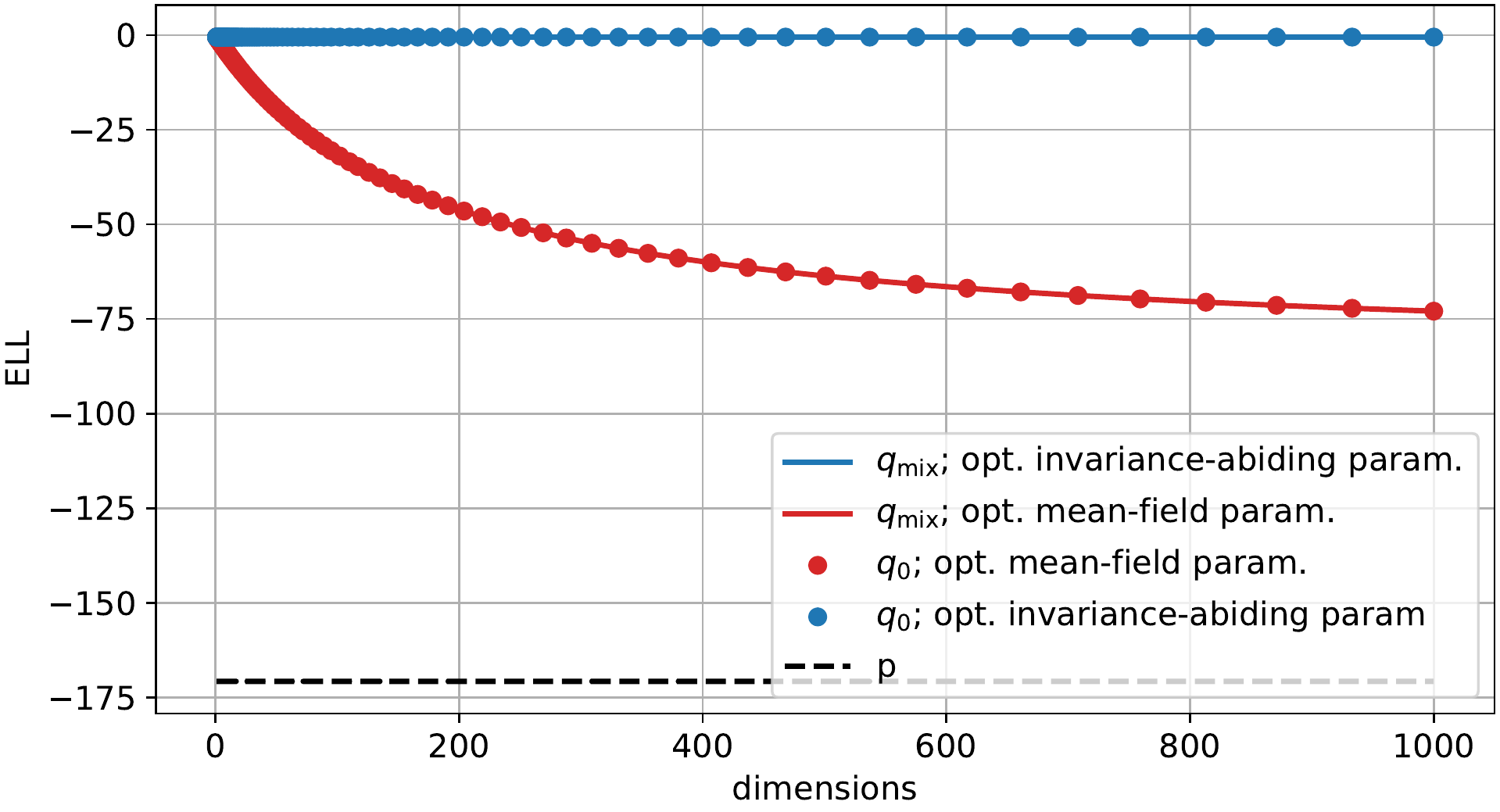}
		\caption{$\expect{q}{\ln p(\Dataset \given \w)}$.}
		\label{fig:translation:ell}
	\end{subfigure}
    \begin{subfigure}[t]{0.485\textwidth}\centering
		\includegraphics[width=\textwidth]{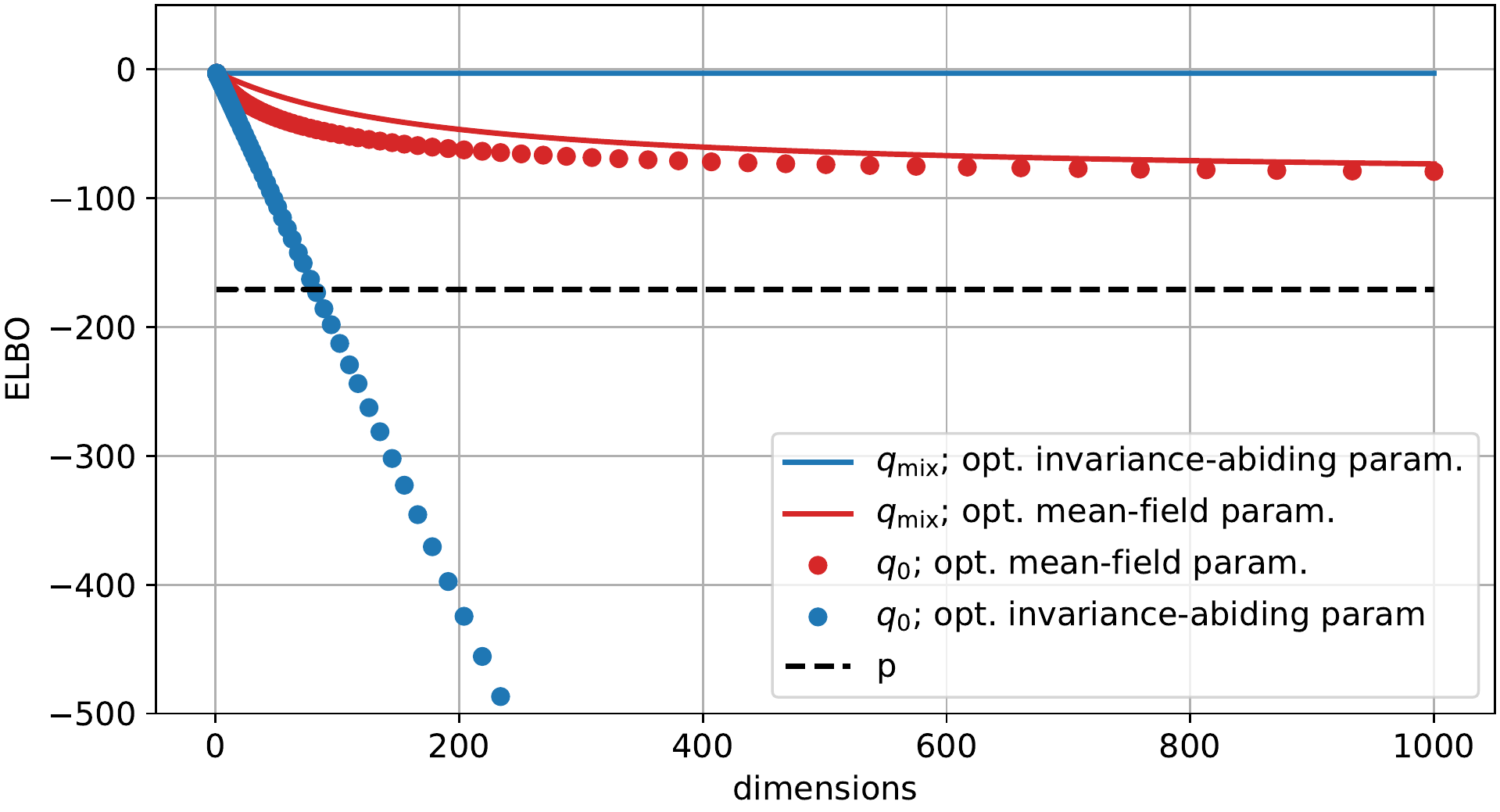}
		\caption{$\ELBO(q,\, \Dataset) = \expect{q}{\ln p(\Dataset \given \w)} - \kld{q}{p}$.}
		\label{fig:translation:elbo}
	\end{subfigure}
	\hfill
	\begin{subfigure}[t]{0.485\textwidth}\centering
		\includegraphics[width=\textwidth]{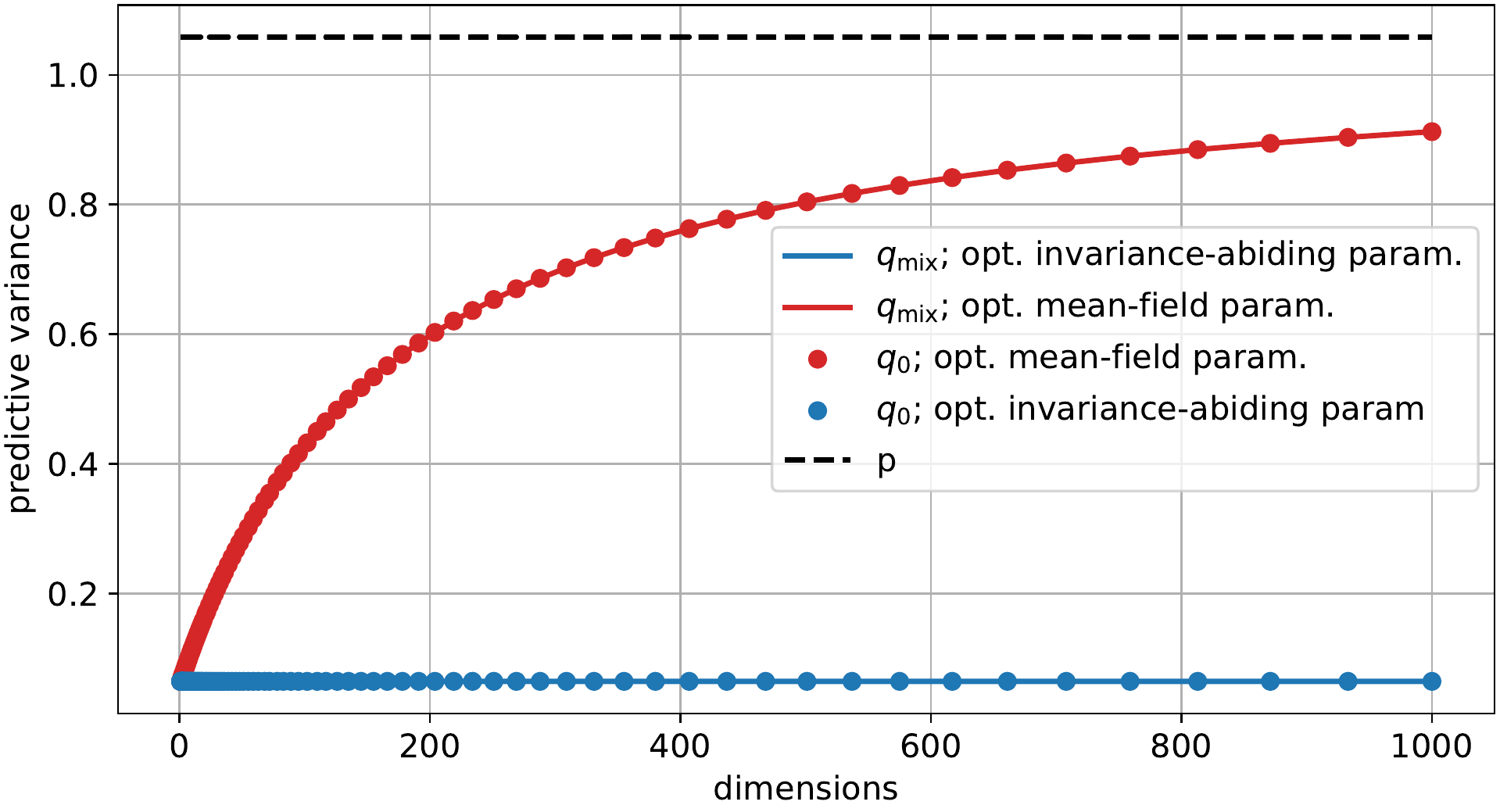}
		\caption{$\variance{\w \sim q}{\frac{1}{K} \1^{\text{T}} \w} + \sigma^2_y$.}
		\label{fig:translation:pred_var}
	\end{subfigure}
    \caption{
    	ELBO loss terms and predictive variance for different posterior approximations.
    	Lines/dots indicate the invariance-abiding/mean-field posterior, respectively. 
    	Red/blue indicates the optimal invariance-abiding/mean-field parameters (cf.~\eqref{eq:canonical_transinvariant:optimum}). 
    	Prior variance is $\boldsymbol{\sigma}^2 = K \cdot \1$, and $\sigma^2_y = 1/ \left(2 \pi \mathrm{e}\right)$; we used $N=10$ identical observations with value $y=1$ and inputs $\x=\1$.
    	Notably, both posterior approximations yield the same predictive distribution as can be seen by the ELL (b) and predictive variance (d). The variance of the mean-field approximation reverts to the prior variance as $K \rightarrow \infty$. 
    	\vspace{-1em}
    }
    \label{fig:translation:loss_terms}
\end{figure}
\section{Translation invariance in Bayesian neural networks}\label{subsec:translation_invariance_bnn}
Let us now go back to the general NN model in Sec.~\ref{sec:background:vbnn}. In order to describe the set of all invariances, note that for each node $z_{l,j}$ in layer $l$ (or the output node $y$), there is a subspace that keeps the value $z_{l,j}$ (or $y$) invariant when using the translation-invariance model in~\eqref{eq:B_invariance}, because the value itself only depends on the \emph{actual} activation $\z_{l-1}$ (or the input $\x$).

More formally, the following equations model all translation invariances of a NN at a data point $\x$:
\begin{align}
\forall \bDelta_{L,1}:\quad f(\x) & =  h_L\left(\w_{L,1}^{\text{T}}\z_{L-1}\right) = 
h_L\left(\left(\w_{L,1}+\B_{\z_{L-1}}\bDelta_{L,1}\right)^{\text{T}}\z_{L-1}\right) \,, \label{eq:vbnn_last_layer_invariant_modelled} \\
\forall j \in \{1,\ldots,n_l\},\bDelta_{l,j}:\quad z_{l,j} & =  h_l\left(\w_{l,j}^{\text{T}}\z_{l-1}\right)
h_l\left(\left(\w_{l,j}+\B_{\z_{l-1}}\bDelta_{l,j}\right)^{\text{T}}\z_{l-1}\right) \,, \label{eq:vbnn_middle_layer_invariant_modelled} \\
\forall j\in\{1,\ldots,n_1\}, \bDelta_{1,j}:\quad z_{1,j} & = h_1\left(\w_{1,j}^{\text{T}}\x\right) = 
h_1\left(\left(\w_{1,j}+\B_{\x}\bDelta_{1,j}\right)^{\text{T}}\x\right) \,, 
\end{align}
where the layer index $l$ ranges from $2,\ldots,L-1$ and $\B_\z$ is given by 
\[
    \mathbf{B}_{\mathbf{z}}:= \left[\begin{array}{c}
        \mathbf{I}\\
        \begin{array}{ccc}
        -\frac{z_{1}}{z_k} & \cdots & -\frac{z_{k-1}}{z_k}\end{array}
        \end{array}\right]\,.      
\]
In order to efficiently compute the invariance-abiding likelihood $\likeAppMix$, note that it also decomposes over the NN layers and the associated parameters as follows: For the $n_1$ nodes in the first layer~we~have
\small
\begin{align}
    \qMix(\w_{1,j}) & = \expect{\x}{\mathcal{N}\left(\w_{1,j};\boldsymbol{\mu}+\frac{\x^{\text{T}}\left(\m-\boldsymbol{\mu}\right)}{\x^{\text{T}}\left(\V+\boldsymbol{\Sigma}\right)\x}  \left(\boldsymbol{\Sigma}\x\right), \boldsymbol{\Sigma}-\frac{1}{\x^{\text{T}}\left(\V+\boldsymbol{\Sigma}\right)\x} \left(\boldsymbol{\Sigma}\x\right) \left(\boldsymbol{\Sigma}\x\right)^{\text{T}} \right)} \,, \label{eq:layer_1_qmix_definition} \\
    P(z_{1,j}) & = \expect{\x}{\expect{\w\sim\qMix(\w_{1,j})}{h_1\left(\w^{\text{T}}\x\right)}} \,, \label{eq:layer_1_latent_activations}
\end{align}
\normalsize
where the outer expectation is taken over $\x \sim p(\Dataset)$ and $p(\Dataset)$ is the empirical distribution over the training set $\Dataset$, and where $\m$, $\V$, $\boldsymbol{\mu}$ and $\boldsymbol{\Sigma}$ are constrained to the parts of the overall weight vector that correspond to $\w_{1,j}$. Now, since the invariance-abiding mechanism for nodes in layer $l$ only depends on the \emph{value} $\z_{l-1}$ of the hidden units in layer $n-1$ (see \eqref{eq:vbnn_last_layer_invariant_modelled} and \eqref{eq:vbnn_middle_layer_invariant_modelled}), we have
\small
\begin{align}
    \qMix(\w_{l,j}) & = \expect{\z}{\mathcal{N}\left(\w_{l,j};\boldsymbol{\mu}+\frac{\z^{\text{T}}\left(\m-\boldsymbol{\mu}\right)}{\z^{\text{T}}\left(\V+\boldsymbol{\Sigma}\right)\z} \left(\boldsymbol{\Sigma}\z\right), \boldsymbol{\Sigma}-\frac{1}{\z^{\text{T}}\left(\V+\boldsymbol{\Sigma}\right)\z} \left(\boldsymbol{\Sigma}\z\right) \left(\boldsymbol{\Sigma}\z\right)^{\text{T}} \right)} \,, \label{eq:layer_n_qmix_definition} \\
    P(z_{l,j}) & = \expect{\z}{\expect{\w\sim\qMix(\w_{l,j})}{h_l\left(\w^{\text{T}}\z_{l-1}\right)}} \label{eq:layer_n_latent_activations} \,,
\end{align}
\normalsize
where again $\m$, $\V$, $\boldsymbol{\mu}$ and $\boldsymbol{\Sigma}$ are constrained to the parts of the overall weight vector that correspond to $\w_{l,j}$ and the expectation over $\z$ is taken with respect to $\z\sim P(\z_{l-1})$.

Thus, the invariance-abiding approximation could e.g.\ be computed with a layer-by-layer iterative optimisation: First, given data $\x \sim p(\Dataset)$ and prior $p(\w)$, use \eqref{eq:layer_1_qmix_definition} and point estimates for all weight vectors in later layers to optimise the mean $\m$ and diagonal covariance $\V$ of the weights in the first layer, i.e., $\{\w_{1,j} | j=1,\ldots,n_1\}$. 
Then, we can use (\ref{eq:layer_1_latent_activations}) to generate $P$ samples $\z_{1,n}$ of the activations of the first layer under the (now fitted) optimal $\qMix(\{\w_{1,j} \})$. Next, for each of these samples $\z_{1,n}$, we can use \eqref{eq:layer_n_qmix_definition} and point estimates for all weight vectors in later layers to optimise the mean $\m$ and diagonal covariance $\V$ of the weights in the second layer and average them for the optimal $\qMix$ of the weights in the second layer. Finally, we can use \eqref{eq:layer_n_latent_activations} to generate samples $\z_{2,i}$ for the second layer. 
This process is repeated until the last layer, and iterated until $\qMix(\w)$ converges.
The complexity of this procedure is no larger than optimising the weight matrices of a single layer, because all previous layers are fully characterised by the distribution of the latent activations $\z_l$ of the hidden layers.


\section{Related work}
Poor empirical performance of VBNNs has been reported in several previous works, e.g.,~\cite{Wenzel2020How,Izmailov2021What,Trippe2017}. For instance, it has been shown that single-layer mean-field VBNNs with ReLU activations can not have large predictive uncertainty between regions of low uncertainty \citep{Foong2020}. In contrast, \citet{Farquhar2020Depth} argue that mean-field approximations are expressive enough for deep BNNs; they prove a universality result that the predictive distribution of mean-field approximations of BNNs with at least 2 layers of hidden units {\em can} approximate any true posterior distribution over function values arbitrarily closely. This result seems in conflict with our work and \citep{Coker2022WideMeanField}, who show that the predictive distribution of mean-field VBNNs reverts to the prior predictive distribution as the network width increases. The discrepancy between these two results is resolved by noting that \citet{Farquhar2020Depth} only shows that the expressive power of the model is large enough but not that the approximate inference algorithms will converge to this solution. Our work sheds light on why the mean-field approximation fails to approximate expressive posteriors via the invariance gap in the ELBO. 
Our framework to model the invariances is similar to and extends previous preliminary works \cite{Kurle2021BDL, moore2016symvi}.

Surprisingly, restricting the parametrisation of the variational posterior results in competitive or even better performance~\citep{Swiatkowski2020,Mishkin2018LowRankCov,Tomczak2020LowRank,Dusenberry2020Rank1Factors}. For example, \citet{Dusenberry2020Rank1Factors} proposes a variational approximation only for rank-1 factors, resulting in inference of a lower-dimensional subspace. The outer product of these low-rank factors perturb a weight matrix that is treated deterministically through maximum a posteriori (MAP) estimation. This approach avoids the invariance problem associated with variational Bayesian inference since the MAP estimate is not impeded by this problem and the subspace of the variational weights do not possess the same invariances. 

Other attempts have proposed to approximate the posterior predictive directly, thereby circumventing the non-identifiability issue \citep{Sun2019, Ma2019Implicit, Wang2018function}. 
While these approaches have shown promising empirical performance, estimating the KL regularisation term in function space is more complicated and can even be ill-defined due to regions of zero prior probability mass \citep{Burt2021Functional}. In a similar vein, previous work also proposed to map the BNN prior to a Gaussian process (functional) prior \citep{FlamShepherd2017MappingGP, Tran2022GPPrior}. 
Another promising direction to circumvent invariance in layered models are deep kernel processes, in which Gram matrices are progressively transformed by nonlinear kernel functions \cite{AitchisonDKP}. 
The Gram matrices, which are treated as the random variables, are invariant to permutations/rotations of the weights/features.

More broadly, (non-)identifiability of parameters and latent variables in probabilistic models is a widely-studied topic both from frequentist \citep{Rao1992identifiability} and Bayesian perspectives \citep{Dawid1979Conditional, Poirier1998revising, Gelfand1999identifiability, Nishihara2013DetectingPS}. 
In a recent example, \citet{Wang2021NonIdent} attributed posterior collapse in the context of variational auto-encoders to non-identifiability of latent variables. 
 
\section{Conclusion}
We have associated the posterior collapse phenomenon of mean-field VBNNs with invariances in the likelihood function, in particular translation invariance. While the invariance does not affect the predictive distribution, the approximations of the posterior---which abide and do not abide the invariance---differ in the KL regularisation term and consequently in the tightness of the ELBO objective. We proved that the objectives of the two approximations differ by the relative entropy (KL) between the standard mean-field approximation and the invariance-abiding distribution. We related this to a data-related bound on the KL regularisation, which prevents fits for which the gap is large.

We studied over-parametrised Bayesian linear regression as the canonical model that exhibits \emph{translation invariance} and for which the relevant terms can be computed exactly. A detailed analysis of this model confirms our hypothesis that the invariance leads to a significant additional gap in the ELBO objective compared to approximations that model the invariance. It is this very gap which prevents mean-field approximation to achieve the same fit as an invariance-abiding approximation and instead leads to a collapse of the posterior variances to the prior variance.

While we can compute the invariance gap for the over-parametrised linear model, we have not yet identified a computationally efficient procedure for layered models or for other invariances. However, our work provides the mathematical tools to address over-regularisation due to invariances. Future work will focus on approximations of the invariance gap in order to correct the ELBO objective for translation and permutation invariances in general neural network functions. 
\bibliographystyle{abbrvnat}
\bibliography{vbnn}
\section*{Checklist}


\begin{enumerate}

\item For all authors...
\begin{enumerate}
  \item Do the main claims made in the abstract and introduction accurately reflect the paper's contributions and scope?
    \answerYes{}
  \item Did you describe the limitations of your work?
    \answerYes{We point out limitations and future work throughout the paper.}
  \item Did you discuss any potential negative societal impacts of your work?
    \answerNo{This is purely theoretical work without direct negative societal impacts.}
  \item Have you read the ethics review guidelines and ensured that your paper conforms to them?
    \answerYes{}
\end{enumerate}

\item If you are including theoretical results...
\begin{enumerate}
  \item Did you state the full set of assumptions of all theoretical results?
    \answerYes{}
  \item Did you include complete proofs of all theoretical results?
    \answerYes{}
\end{enumerate}

\item If you ran experiments...\answerNo{No experiments}

\item If you are using existing assets (e.g., code, data, models) or curating/releasing new assets...
    \answerNo{No existing assets used.}

\item If you used crowdsourcing or conducted research with human subjects...
    \answerNo{No crowdsourcing/human subjects.}

\end{enumerate}

\appendix

\section{Notation}\label{app:notation}
We denote matrices and vectors with bold upper- and lower-case letters, e.g.~$\mathbf{a}$ and $\mathbf{A}$. In order to address an element of a vector or matrix, we will use non-bold letters, e.g.~$\mathbf{x}=[x_{1},x_{2},\ldots,x_{N}]^{\text{T}}$. We will use the superscript $^{\text{T}}$ to denote a transpose of a vector and $|\mathbf{A}|$ to denote the determinant of the matrix $\mathbf{A}$. We use $\text{Diag}(\mathbf{a})$ to denote the diagonal matrix constructed from a vector, and $\text{diag}(\mathbf{A})$ to denote the diagonal vector of a matrix. We use the notation $\mathcal{N}\left(\mathbf{x};\mathbf{m},\mathbf{V}\right)$ to denote the density of the Gaussian distribution given by 
\[
\mathcal{N}\left(\mathbf{x};\boldsymbol{\mu},\boldsymbol{\mathbf{\Sigma}}\right):=\left|2\pi\boldsymbol{\mathbf{\Sigma}}\right|^{-\frac{1}{2}}\cdot\exp\left(-\frac{1}{2}\left(\mathbf{x}-\boldsymbol{\mu}\right)^{\text{T}}\boldsymbol{\mathbf{\Sigma}}^{-1}\left(\mathbf{x}-\boldsymbol{\mu}\right)\right)\,.
\]
An alternative representation of the Gaussian distribution is in terms of its canonical parameters $\boldsymbol{\eta}=\boldsymbol{\Sigma}^{-1}\boldsymbol{\mu}$ and $\boldsymbol{\mathbf{\Lambda}}=\boldsymbol{\Sigma}^{-1}$:
\[
\mathcal{G}\left(\mathbf{x};\boldsymbol{\eta},\boldsymbol{\mathbf{\Lambda}}\right):=\exp\left(\mathbf{x}^{\text{T}}\boldsymbol{\eta}-\frac{1}{2}\mathbf{x}^{\text{T}}\boldsymbol{\mathbf{\Lambda}}\mathbf{x}-\frac{1}{2}\boldsymbol{\eta}^{\text{T}}\boldsymbol{\mathbf{\Lambda}}^{-1}\boldsymbol{\eta}-\frac{1}{2}\ln\left(\left|2\pi\boldsymbol{\mathbf{\Lambda}}^{-1}\right|\right)\right)\,.
\]
Note that $\mathcal{G}\left(\mathbf{x};\boldsymbol{\eta}_{1},\boldsymbol{\mathbf{\Lambda}}_{1}\right)\cdot\mathcal{G}\left(\mathbf{x};\boldsymbol{\eta}_{2},\boldsymbol{\mathbf{\Lambda}}_{2}\right)\propto\mathcal{G}\left(\mathbf{x};\boldsymbol{\eta}_{1}+\boldsymbol{\eta}_{2},\boldsymbol{\mathbf{\Lambda}}_{1}+\boldsymbol{\mathbf{\Lambda}}_{2}\right)$ which renders this canonical parameterisation very useful when considering the multiplication of Gaussian densities. Furthermore, we use $\expect{\w\sim p}{f(\w)}$ to denote the expectation of the function $f(\cdot)$ when $\w$ is drawn from the distribution $p$. Also, we use $\variance{\w\sim p}{f(\w)}$ to denote the variance of the function $f(\cdot)$ when $\w$ is drawn from the distribution $p$ defined by 
\[
\variance{\w\sim p}{f(\w)}:=\expect{\w\sim p}{\left(f(\w) - \expect{\w^\prime \sim p}{f(\w^\prime)}\right)^2}\,.
\]

\section{Convolution of Gaussian Measures}\label{app:convolution_gaussian}
\begin{thm}[Convolution of Gaussian Measures]
 \label{thm:convolution-of-Gaussians}For any $\mathbf{A}\in\mathbb{R}^{n\times k}$
and $\mathbf{b}\in\mathbb{R}^{n}$ it holds that 
\[
p\left(\mathbf{x}|\boldsymbol{\theta}\right)=\mathcal{N}\left(\mathbf{x};\mathbf{A}\boldsymbol{\theta}+\mathbf{b},\mathbf{V}\right)\,,\;p\left(\boldsymbol{\theta}\right)=\mathcal{N}\left(\boldsymbol{\theta};\boldsymbol{\mu},\boldsymbol{\Sigma}\right)\,,
\]
implies that 
\begin{align}
p\left(\boldsymbol{\theta}|\mathbf{x}\right) & =\mathcal{N}\left(\boldsymbol{\theta};\mathbf{C}^{-1}\left(\mathbf{A}^{\text{T}}\mathbf{V}^{-1}\left(\mathbf{x}-\mathbf{b}\right)+\boldsymbol{\Sigma}^{-1}\boldsymbol{\mu}\right),\mathbf{C}^{-1}\right)\,,\label{eq:conditional}\\
p\left(\mathbf{x}\right) & =\mathcal{N}\left(\mathbf{x};\mathbf{A}\boldsymbol{\mu}+\mathbf{b},\mathbf{V}+\mathbf{A}\boldsymbol{\Sigma}\mathbf{A}^{\text{T}}\right)\,,\label{eq:marginal}\\
\mathbf{C} & =\mathbf{A}^{\text{T}}\mathbf{V}^{-1}\mathbf{A}+\boldsymbol{\Sigma}^{-1}\,.\nonumber 
\end{align}
\end{thm}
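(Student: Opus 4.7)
The plan is to derive both identities from the joint density $p(\mathbf{x},\boldsymbol{\theta}) = p(\mathbf{x}\vert\boldsymbol{\theta})\,p(\boldsymbol{\theta})$, which is a Gaussian on $\mathbb{R}^{n+k}$, using complementary viewpoints for the two parts. For the conditional~\eqref{eq:conditional} I would use the canonical-parameter identity recalled in App.~\ref{app:notation}; for the marginal~\eqref{eq:marginal} I would invoke closure of the Gaussian family under affine maps and independent sums. Together these avoid ever having to directly invert $\mathbf{V} + \mathbf{A}\boldsymbol{\Sigma}\mathbf{A}^{\text{T}}$.

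For~\eqref{eq:conditional}: fix $\mathbf{x}$ and view $p(\mathbf{x}\vert\boldsymbol{\theta})$ as a function of $\boldsymbol{\theta}$. Up to a $\boldsymbol{\theta}$-free factor, it equals $\mathcal{G}\bigl(\boldsymbol{\theta};\,\mathbf{A}^{\text{T}}\mathbf{V}^{-1}(\mathbf{x}-\mathbf{b}),\,\mathbf{A}^{\text{T}}\mathbf{V}^{-1}\mathbf{A}\bigr)$, while $p(\boldsymbol{\theta}) = \mathcal{G}(\boldsymbol{\theta};\,\boldsymbol{\Sigma}^{-1}\boldsymbol{\mu},\,\boldsymbol{\Sigma}^{-1})$. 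By the canonical-parameter product rule stated in App.~\ref{app:notation}, their product is proportional to $\mathcal{G}\bigl(\boldsymbol{\theta};\,\mathbf{A}^{\text{T}}\mathbf{V}^{-1}(\mathbf{x}-\mathbf{b}) + \boldsymbol{\Sigma}^{-1}\boldsymbol{\mu},\,\mathbf{C}\bigr)$ with $\mathbf{C} = \mathbf{A}^{\text{T}}\mathbf{V}^{-1}\mathbf{A} + \boldsymbol{\Sigma}^{-1}$. Converting back to mean/covariance form yields exactly~\eqref{eq:conditional}. Normalisation of $p(\boldsymbol{\theta}\vert\mathbf{x})$ as a probability density absorbs any $\boldsymbol{\theta}$-free constants, so no further bookkeeping is needed.

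For~\eqref{eq:marginal}: the hypotheses are equivalent to the affine observation model $\mathbf{x} = \mathbf{A}\boldsymbol{\theta} + \mathbf{b} + \boldsymbol{\epsilon}$ with $\boldsymbol{\theta}\sim\mathcal{N}(\boldsymbol{\mu},\boldsymbol{\Sigma})$ and $\boldsymbol{\epsilon}\sim\mathcal{N}(\mathbf{0},\mathbf{V})$ independent. Since the Gaussian family is closed under affine maps and independent sums, $\mathbf{x}$ is Gaussian with $\mathbb{E}[\mathbf{x}] = \mathbf{A}\boldsymbol{\mu}+\mathbf{b}$ and $\mathrm{Cov}[\mathbf{x}] = \mathbf{A}\boldsymbol{\Sigma}\mathbf{A}^{\text{T}} + \mathbf{V}$ by linearity and independence of $\boldsymbol{\theta}$ and $\boldsymbol{\epsilon}$. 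This immediately gives~\eqref{eq:marginal}.

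The main point of care, rather than a true obstacle, is that the canonical-product route for~\eqref{eq:conditional} produces a $\boldsymbol{\theta}$-free prefactor depending on $\mathbf{x}$; one must either discard it (justified by the posterior's normalisation) or, if one wants a fully self-contained derivation that yields both identities in one pass, complete the square in $\boldsymbol{\theta}$ inside $\int p(\mathbf{x}\vert\boldsymbol{\theta})p(\boldsymbol{\theta})\,d\boldsymbol{\theta}$. Going this second route is where the computational work actually sits: matching the residual precision of $\mathbf{x}$ to $(\mathbf{V} + \mathbf{A}\boldsymbol{\Sigma}\mathbf{A}^{\text{T}})^{-1}$ requires the Woodbury identity, and matching normalisation constants requires the matrix determinant lemma. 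Throughout we implicitly assume that $\mathbf{V}$, $\boldsymbol{\Sigma}$, and hence $\mathbf{C}$ are positive definite, which is needed for the formulas as written.
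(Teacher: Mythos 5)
Your proposal is correct, and it diverges from the paper's proof in an instructive way. For the conditional \eqref{eq:conditional} you and the paper are doing essentially the same thing: the paper completes the square in $\boldsymbol{\theta}$ via Theorem A.86 of \cite{herbrich2002learningkernelclassifiers}, while you read $p(\mathbf{x}\vert\boldsymbol{\theta})$ as $\mathcal{G}\bigl(\boldsymbol{\theta};\mathbf{A}^{\text{T}}\mathbf{V}^{-1}(\mathbf{x}-\mathbf{b}),\mathbf{A}^{\text{T}}\mathbf{V}^{-1}\mathbf{A}\bigr)$ up to a $\boldsymbol{\theta}$-free factor and apply the canonical product rule from App.~\ref{app:notation}; these are the same computation in different clothing (and indeed the paper itself uses your canonical-product style in App.~\ref{app:canonical_translation_invariant_model}). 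The genuine difference is the marginal \eqref{eq:marginal}: the paper extracts it from the \emph{same} completed square, since the leftover term $d(\mathbf{x})=(\mathbf{x}-\mathbf{b}-\mathbf{A}\boldsymbol{\mu})^{\text{T}}(\mathbf{V}+\mathbf{A}\boldsymbol{\Sigma}\mathbf{A}^{\text{T}})^{-1}(\mathbf{x}-\mathbf{b}-\mathbf{A}\boldsymbol{\mu})$ handed back by Theorem A.86 is exactly the exponent of the marginal, so both identities drop out of one decomposition. You instead argue probabilistically from $\mathbf{x}=\mathbf{A}\boldsymbol{\theta}+\mathbf{b}+\boldsymbol{\epsilon}$ with independent Gaussian summands, which is shorter, avoids inverting $\mathbf{V}+\mathbf{A}\boldsymbol{\Sigma}\mathbf{A}^{\text{T}}$ and any Woodbury or determinant bookkeeping, but costs you the unified one-pass structure; your closing paragraph correctly identifies this trade-off. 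One small point of care you already flag implicitly: $\mathbf{A}^{\text{T}}\mathbf{V}^{-1}\mathbf{A}$ may be singular when $k>n$, so the intermediate $\mathcal{G}$ for the likelihood is only a shape function rather than a normalisable density; this is harmless because only the sum $\mathbf{C}$ needs to be invertible, but it is worth stating explicitly rather than leaving it under the blanket positive-definiteness assumption.
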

\begin{proof}
Let's start by proving (\ref{eq:conditional}). First, we note that
\[
p\left(\boldsymbol{\theta}|\mathbf{x}\right)=\frac{p\left(\mathbf{x}|\boldsymbol{\theta}\right)\cdot p\left(\boldsymbol{\theta}\right)}{\int p\left(\mathbf{x}|\widetilde{\boldsymbol{\theta}}\right)\cdot p\left(\widetilde{\boldsymbol{\theta}}\right)\,d\widetilde{\boldsymbol{\theta}}}=\frac{\mathcal{N}\left(\mathbf{x};\mathbf{A}\boldsymbol{\theta}+\mathbf{b},\mathbf{V}\right)\cdot\mathcal{N}\left(\boldsymbol{\theta};\boldsymbol{\mu},\boldsymbol{\Sigma}\right)}{\int\mathcal{N}\left(\mathbf{x};\mathbf{A}\widetilde{\boldsymbol{\theta}}+\mathbf{b},\mathbf{V}\right)\cdot\mathcal{N}\left(\widetilde{\boldsymbol{\theta}};\boldsymbol{\mu},\boldsymbol{\Sigma}\right)\,d\widetilde{\boldsymbol{\theta}}}\,,
\]
where only the numerator depends on $\boldsymbol{\theta}$. Thus, the numerator is given by 
\[
c\cdot\exp\left(-\frac{1}{2}\left[\left(\left(\mathbf{x}-\mathbf{b}\right)-\mathbf{A}\boldsymbol{\theta}\right)^{\text{T}}\mathbf{V}^{-1}\left(\left(\mathbf{x}-\mathbf{b}\right)-\mathbf{A}\boldsymbol{\theta}\right)+\left(\boldsymbol{\theta}-\boldsymbol{\mu}\right)^{\text{T}}\boldsymbol{\mathbf{\Sigma}}^{-1}\left(\boldsymbol{\theta}-\boldsymbol{\mu}\right)\right]\right)\,,
\]
where $c=\left|2\pi\mathbf{V}\right|^{-\frac{1}{2}}\cdot\left|2\pi\boldsymbol{\mathbf{\Sigma}}\right|^{-\frac{1}{2}}$
is independent of $\mathbf{x}$ and $\boldsymbol{\theta}$. Using Theorem A.86 in \cite{herbrich2002learningkernelclassifiers}, this quadratic form in $\boldsymbol{\theta}$ can be rewritten as 
\[
c\cdot\exp\left(-\frac{1}{2}\left[\left(\boldsymbol{\theta}-\mathbf{c}\right)^{\text{T}}\mathbf{C}\left(\boldsymbol{\theta}-\mathbf{c}\right)+d\left(\mathbf{x}\right)\right]\right)\,,
\]
where
\begin{align}
\mathbf{C} & =\mathbf{A}^{\text{T}}\mathbf{V}^{-1}\mathbf{A}+\boldsymbol{\Sigma}^{-1}\,,\nonumber \\
\mathbf{Cc} & =\mathbf{A}^{\text{T}}\mathbf{V}^{-1}\left(\mathbf{x}-\mathbf{b}\right)+\boldsymbol{\Sigma}^{-1}\boldsymbol{\mu}\,,\nonumber \\
d\left(\mathbf{x}\right) & =\left(\mathbf{x}-\mathbf{b}-\mathbf{A}\boldsymbol{\mu}\right)^{\text{T}}\left(\mathbf{V}+\mathbf{A}\boldsymbol{\Sigma}\mathbf{A}^{\text{T}}\right)^{-1}\left(\mathbf{x}-\mathbf{b}-\mathbf{A}\boldsymbol{\mu}\right)\,.\label{eq:d_constant}
\end{align}
Since $d(\mathbf{x})$ does not depend on $\boldsymbol{\theta}$, it get incorporated into the normalization constant which proves (\ref{eq:conditional}).

In order to prove (\ref{eq:marginal}), note that 
\begin{align*}
p\left(\mathbf{x}\right) & =\int p\left(\mathbf{x}|\widetilde{\boldsymbol{\theta}}\right)\cdot p\left(\widetilde{\boldsymbol{\theta}}\right)\,d\widetilde{\boldsymbol{\theta}}\\
 & =\int\mathcal{N}\left(\mathbf{x};\mathbf{A}\widetilde{\boldsymbol{\theta}}+\mathbf{b},\mathbf{V}\right)\cdot\mathcal{N}\left(\widetilde{\boldsymbol{\theta}};\boldsymbol{\mu},\boldsymbol{\Sigma}\right)\,d\widetilde{\boldsymbol{\theta}}\\
 & =\int c\cdot\exp\left(-\frac{1}{2}\left[\left(\widetilde{\boldsymbol{\theta}}-\mathbf{c}\right)^{\text{T}}\mathbf{C}\left(\widetilde{\boldsymbol{\theta}}-\mathbf{c}\right)+d\left(\mathbf{x}\right)\right]\right)\,d\widetilde{\boldsymbol{\theta}}\\
 & =c\cdot\exp\left(-\frac{1}{2}d\left(\mathbf{x}\right)\right)\cdot\int\exp\left(-\frac{1}{2}\left[\left(\widetilde{\boldsymbol{\theta}}-\mathbf{c}\right)^{\text{T}}\mathbf{C}\left(\widetilde{\boldsymbol{\theta}}-\mathbf{c}\right)\right]\right)\,d\widetilde{\boldsymbol{\theta}}\\
 & =c\cdot\exp\left(-\frac{1}{2}d\left(\mathbf{x}\right)\right)\cdot\left|2\pi\mathbf{C}^{-1}\right|^{\frac{1}{2}}\\
 & =\tilde{c}\cdot\exp\left(-\frac{1}{2}\left(\left(\mathbf{x}-\left(\mathbf{A}\boldsymbol{\mu}+\mathbf{b}\right)\right)^{\text{T}}\left(\mathbf{V}+\mathbf{A}\boldsymbol{\Sigma}\mathbf{A}^{\text{T}}\right)^{-1}\left(\mathbf{x}-\left(\mathbf{A}\boldsymbol{\mu}+\mathbf{b}\right)\right)\right)\right)\\
 & =\mathcal{N}\left(\mathbf{x};\mathbf{A}\boldsymbol{\mu}+\mathbf{b},\mathbf{V}+\mathbf{A}\boldsymbol{\Sigma}\mathbf{A}^{\text{T}}\right)\,,
\end{align*}
where the penultimate line follows again from Theorem A.86 in \cite{herbrich2002learningkernelclassifiers} with $d(\mathbf{x})$ defined in (\ref{eq:d_constant}).
\end{proof}
\begin{thm}[Woodbury formula]
 \label{thm:woodbury}Let $\mathbf{C}$ be an invertible $n\times n$ matrix. Then, for any $\mathbf{A}\in\mathbb{R}^{n\times k}$ and $\mathbf{B}\in\mathbb{R}^{k\times n}$,
\[
\left(\mathbf{C}+\mathbf{A}\mathbf{B}\right)^{-1}=\mathbf{C}^{-1}-\mathbf{C}^{-1}\mathbf{A}\left(\mathbf{I}+\mathbf{B}\mathbf{C}^{-1}\mathbf{A}\right)^{-1}\mathbf{B}\mathbf{C}^{-1}\,.
\]
\end{thm}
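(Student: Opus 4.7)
The plan is to prove the Woodbury identity by direct verification: I will define $\mathbf{M}$ to be the claimed right-hand side and show that $(\mathbf{C}+\mathbf{A}\mathbf{B})\,\mathbf{M} = \mathbf{I}$ via straightforward algebraic manipulation. Since $\mathbf{C}+\mathbf{A}\mathbf{B}$ is assumed invertible (this is implicit in the statement, as is the invertibility of $\mathbf{I}+\mathbf{B}\mathbf{C}^{-1}\mathbf{A}$, which I would briefly note), this one-sided identity is enough to conclude $\mathbf{M} = (\mathbf{C}+\mathbf{A}\mathbf{B})^{-1}$.

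Concretely, I would first expand the product
\[
(\mathbf{C}+\mathbf{A}\mathbf{B})\bigl[\mathbf{C}^{-1} - \mathbf{C}^{-1}\mathbf{A}(\mathbf{I}+\mathbf{B}\mathbf{C}^{-1}\mathbf{A})^{-1}\mathbf{B}\mathbf{C}^{-1}\bigr]
\]
into four terms. Two of them simplify immediately: $\mathbf{C}\,\mathbf{C}^{-1} = \mathbf{I}$ and $\mathbf{A}\mathbf{B}\,\mathbf{C}^{-1}$ is left untouched for now. The remaining two terms both contain the factor $(\mathbf{I}+\mathbf{B}\mathbf{C}^{-1}\mathbf{A})^{-1}\mathbf{B}\mathbf{C}^{-1}$, so the key step is to factor $\mathbf{A}$ on the left and combine them as
\[
-\mathbf{A}\bigl[\mathbf{I} + \mathbf{B}\mathbf{C}^{-1}\mathbf{A}\bigr](\mathbf{I}+\mathbf{B}\mathbf{C}^{-1}\mathbf{A})^{-1}\mathbf{B}\mathbf{C}^{-1} = -\mathbf{A}\mathbf{B}\mathbf{C}^{-1},
\]
which cancels the surviving $\mathbf{A}\mathbf{B}\mathbf{C}^{-1}$ term and leaves $\mathbf{I}$.

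The only subtle point is checking that the inner matrix $\mathbf{I}+\mathbf{B}\mathbf{C}^{-1}\mathbf{A}$ is indeed invertible whenever $\mathbf{C}+\mathbf{A}\mathbf{B}$ is; this follows from the standard identity $\det(\mathbf{I}+\mathbf{B}\mathbf{C}^{-1}\mathbf{A}) = \det(\mathbf{C}+\mathbf{A}\mathbf{B})/\det(\mathbf{C})$ (Sylvester's determinant lemma), so I would either cite this or simply assume invertibility as a hypothesis, matching the way the identity is used elsewhere in the paper. I do not foresee a genuine obstacle: the proof is a four-line computation, and the only thing to be careful about is the order of the non-commuting factors $\mathbf{A}, \mathbf{B}, \mathbf{C}^{-1}$ when grouping the cancelling terms.
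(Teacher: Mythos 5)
Your verification argument is correct: expanding $(\mathbf{C}+\mathbf{A}\mathbf{B})\,\mathbf{M}$ into four terms and factoring $\mathbf{A}$ out of the two terms containing $(\mathbf{I}+\mathbf{B}\mathbf{C}^{-1}\mathbf{A})^{-1}\mathbf{B}\mathbf{C}^{-1}$ does collapse the product to $\mathbf{I}$, and for square matrices this one-sided identity already forces $\mathbf{M}=(\mathbf{C}+\mathbf{A}\mathbf{B})^{-1}$. Note that the paper itself states this theorem without any proof, treating it as the standard Sherman--Morrison--Woodbury identity, so there is no paper proof to compare against; your direct computation is the canonical argument and is complete once the four-term expansion is written out. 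Your handling of the one genuine subtlety --- that the statement's ``for any $\mathbf{A},\mathbf{B}$'' silently requires $\mathbf{I}+\mathbf{B}\mathbf{C}^{-1}\mathbf{A}$ to be invertible for the right-hand side to make sense, which is equivalent to invertibility of $\mathbf{C}+\mathbf{A}\mathbf{B}$ via Sylvester's determinant identity --- is appropriate and in fact slightly more careful than the paper's own statement.
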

\section{Invariance gap and posterior-predictive equivalence}\label{app:invaraiance:gap_and_equivalence}
In this section, we show that the posterior predictive distribution of the mean-field and invariance-abiding distribution are identical (for identical parametrisation of the likelihood approximation) as stated in \eqref{eq:posterior_predictive_identical}, and that the difference in the respective KL regularisation terms can be quantified by the KL defined in \eqref{eq:invariance_gap}. 

We first discuss the conditions from \eqref{eq:condition_1} and \eqref{eq:condition_2}. The goal is to use the invariance property of the likelihood function \eqref{eq:likelihood_invariance:definition}. Unfortunately, the prior does not generally have the same invariance. Yet, the mean-field approximate posterior defined as the product between prior and likelihood approximation can still have the invariance property, as we show for permutation and translation invariance with mean-field Gaussian likelihood approximation and prior. This condition is defined in \eqref{eq:condition_1} for each mean-field posterior in the integral over $\rep$:
\begin{align*}
\begin{split}
\qMix(\w; \varparams)
&= \Zmix^{-1} \, p(\w)  \cdot \int p(\rep) \, \likeApp_0(t(\w, \rep); \varparams) d\rep 
= \int p(\rep) \, \frac{1}{\Zmix} \, p(\w) \cdot \likeApp_{0}(t(\w, \rep); \varparams) d\rep \\
&= \int p(\rep) \, \frac{1}{\Zmix} \, p(t(\w, \varphi(\rep))) \cdot \likeApp_0(t(\w, \varphi(\rep)); \varparams) d\rep \\
&= \int p(\rep) \, \frac{Z_0(\rep)}{\Zmix} \, q_0(t(\w, \varphi(\rep)); \varparams) d\rep,
\end{split}
\end{align*}
where $\Zmix$ and $Z_0(\rep)$ are normalisation constants. The second line introduced the assumption that there exists a surjective mapping $\rep^{\prime} = \varphi(\rep)$ such that the product of the untransformed prior $p(\w)$ and the transformed likelihood approximation $\likeApp_0(t(\w, \varphi(\rep)); \varparams)$ are identical to the product of the transformed prior and likelihood approximation with $\rep^{\prime}$, i.e.\ as stated in the main text,
\begin{align*}
\begin{split}
\forall \rep\sim p(\rep):  p(\w) \cdot \likeApp_0(t(\w, \rep); \varparams) = p(t(\w, \varphi(\rep))) \cdot \likeApp_0(t(\w, \varphi(\rep)); \varparams).
\end{split}
\end{align*}
This condition may seem quite limiting, however, we show that this condition holds for \emph{permutation} invariance with the isotropic Gaussian prior and for \emph{translation} invariance with Gaussian priors and Gaussian likelihood approximation. 

\paragraph{Condition \eqref{eq:condition_1} and \eqref{eq:condition_2} for permutation invariance.} Consider first permutation invariance (cf.~App.~\ref{app:permutation_invariance_bnns}). It is easy to see that for the isotropic Gaussian prior: $p(\w) = p(\mathbf{P}_{\rep} \w)$. Also, note that $ \mathcal{G}(\mathbf{P} \w; \, \boldsymbol{\eta},  \boldsymbol{\Lambda}) = \mathcal{G}(\w; \mathbf{P}^{\text{T}} \boldsymbol{\eta},  \mathbf{P}^{\text{T}} \boldsymbol{\Lambda} \mathbf{P})$. Thus, the product between the untransformed Gaussian prior and the permuted Gaussian likelihood approximation is
\begin{align}
p(\w) \cdot \likeApp_0( \mathbf{P} \w ) 
= p(\mathbf{P} \w) \cdot \likeApp_0( \mathbf{P} \w ) 
&= \mathcal{G}(\mathbf{P} \w; \boldsymbol{\eta}_{\mathrm{p}}, \boldsymbol{\Lambda}_{\mathrm{p}}) \cdot 
\mathcal{G}(\mathbf{P} \w; \boldsymbol{\eta}_{\likeApp}, \boldsymbol{\Lambda}_{\likeApp}) \\
&= \mathcal{G}(\w;  \mathbf{P}^{\text{T}} \boldsymbol{\eta}_{\mathrm{p}},  \mathbf{P}^{\text{T}}\boldsymbol{\Lambda}_{\mathrm{p}}  \mathbf{P} ) \cdot 
\mathcal{G}(\w; \mathbf{P}^{\text{T}}\boldsymbol{\eta}_{\likeApp},  \mathbf{P}^{\text{T}}\boldsymbol{\Lambda}_{\likeApp}  \mathbf{P}) \\
&\propto \mathcal{G}(\w; \mathbf{P}^{\text{T}} ( \boldsymbol{\eta}_{\mathrm{p}} +  \boldsymbol{\eta}_{\likeApp}), \mathbf{P}^{\text{T}} (\boldsymbol{\Lambda}_{\mathrm{p}}  + \boldsymbol{\Lambda}_{\likeApp}) \mathbf{P}) \\
& = \mathcal{G}(\mathbf{P} \w;  \boldsymbol{\eta}_{\mathrm{p}} + \boldsymbol{\eta}_{\likeApp}, \boldsymbol{\Lambda}_{\mathrm{p}} + \boldsymbol{\Lambda}_{\likeApp}) = q_0(\mathbf{P} \w). 
\end{align}
Hence, for permutation invariance with the isotropic Gaussian prior and Gaussian likelihood approximation, we have
\begin{equation}
\forall \rep\sim p(\rep): p(\w) \cdot \likeApp_0(\mathbf{P}_{\rep} \w) = p(\mathbf{P}_{\rep} \w) \cdot \likeApp_0(\mathbf{P}_{\rep} \w)  \propto q_0(\mathbf{P}_{\rep} \w).
\end{equation}
The invariance transformation is thus simply $t(\w, \varphi(\rep)) = t(\w, \rep) = \mathbf{P}_{\rep} \w$ (i.e.\ we do not need the mapping $\varphi$). This is because the prior has no preference over the permutation-induced modes of the likelihood. Thus, we have
\begin{align*}
\forall \rep\sim p(\rep): ~\left|\det \frac{\partial t(\w, \rep)}{\partial \w} \right|^{-1} = \left|\det \frac{\partial \mathbf{P}_{\rep} \w}{\partial \w}\right|^{-1} & = \left|\det \mathbf{P}_{\rep}\right|^{-1} = 1\,.
\end{align*}

\paragraph{Condition \eqref{eq:condition_1} and \eqref{eq:condition_2} for translation invariance.} Next, consider translation invariance, and note that $\mathcal{N}(\w - \bv; \boldsymbol{\mu},  \boldsymbol{\Sigma}) = \mathcal{N}(\w; \boldsymbol{\mu} + \bv,  \boldsymbol{\Sigma}) $, and, consequently, $\mathcal{G}(\w - \bv; \boldsymbol{\eta},  \boldsymbol{\Lambda}) = \mathcal{G}(\w; \boldsymbol{\eta} + \boldsymbol{\Lambda} \bv,  \boldsymbol{\Lambda})$. It is easier to show directly that the product between the untransformed Gaussian prior and translated Gaussian likelihood approximation can be written as a Gaussian posterior translated, because:
\begin{align}
p(\w) \cdot \likeApp_0(\w - \bv) 
& = \mathcal{G}(\w; \boldsymbol{\eta}_{\mathrm{p}}, \boldsymbol{\Lambda}_{\mathrm{p}})  \cdot  
\mathcal{G}(\w - \bv; \boldsymbol{\eta}_{\likeApp}, \boldsymbol{\Lambda}_{\likeApp})  \\
& = \mathcal{G}(\w; \boldsymbol{\eta}_{\mathrm{p}}, \boldsymbol{\Lambda}_{\mathrm{p}})  \cdot  
\mathcal{G}(\w; \boldsymbol{\eta}_{\likeApp} + \boldsymbol{\Lambda}_{\likeApp}  \bv, \boldsymbol{\Lambda}_{\likeApp}) \\
&\propto \mathcal{G}(\w; \boldsymbol{\eta}_{\mathrm{p}} + \boldsymbol{\eta}_{\likeApp} + \boldsymbol{\Lambda}_{\likeApp}  \bv, \boldsymbol{\Lambda}_{\mathrm{p}} + \boldsymbol{\Lambda}_{\likeApp}) \\
&= \mathcal{G}\Big(\w; \boldsymbol{\eta}_{\mathrm{p}} + \boldsymbol{\eta}_{\likeApp} + \overbrace{\left( \boldsymbol{\Lambda}_{\mathrm{p}} + \boldsymbol{\Lambda}_{\likeApp} \right) \left( \boldsymbol{\Lambda}_{\mathrm{p}} + \boldsymbol{\Lambda}_{\likeApp} \right)^{-1}}^{\mathbf{I}} \boldsymbol{\Lambda}_{\likeApp}  \bv, \boldsymbol{\Lambda}_{\mathrm{p}} + \boldsymbol{\Lambda}_{\likeApp} \Big) \\
& = \mathcal{G}\left(\w - \left( \boldsymbol{\Lambda}_{\mathrm{p}} + \boldsymbol{\Lambda}_{\likeApp} \right)^{-1}\boldsymbol{\Lambda}_{\likeApp} \bv; \boldsymbol{\eta}_{\mathrm{p}} + \boldsymbol{\eta}_{\likeApp}, \boldsymbol{\Lambda}_{\mathrm{p}} + \boldsymbol{\Lambda}_{\likeApp}\right) \\
&= q_0\left( \w - \left( \boldsymbol{\Lambda}_{\mathrm{p}} + \boldsymbol{\Lambda}_{\likeApp} \right)^{-1} \boldsymbol{\Lambda}_{\likeApp} \bv \right). 
\end{align}
Since the precision matrices are diagonal, $(\left( \boldsymbol{\Lambda}_{\mathrm{p}} + \boldsymbol{\Lambda}_{\likeApp} \right)^{-1} \boldsymbol{\Lambda}_{\likeApp}) \mathbf{B} \rep = \mathbf{B}  (\left( \boldsymbol{\Lambda}_{\mathrm{p}} + \boldsymbol{\Lambda}_{\likeApp} \right)^{-1} \boldsymbol{\Lambda}_{\likeApp}) \rep$. Consequently, for the translation invariance $\likeApp_{0}(\w) = \likeApp_{0}(t(\w, \rep)) = \likeApp_{0}(\w - \mathbf{B} \rep)$, we have
\begin{align}
\forall \rep \sim p(\rep): 
p(\w) \cdot \likeApp_0(\w - \mathbf{B} \rep)  = p\left( \w - \mathbf{B} \rep^{\prime} \right) \cdot \likeApp_0 \left( \w - \mathbf{B} \rep^{\prime} \right) 
\propto q_0\left( \w - \mathbf{B} \rep^{\prime} \right),
\end{align}
where $\rep^{\prime} = \varphi(\rep) =  \left( \left( \boldsymbol{\Lambda}_{\mathrm{p}} + \boldsymbol{\Lambda}_{\likeApp} \right)^{-1} \boldsymbol{\Lambda}_{\likeApp} \right) \rep$. Thus, we have
\begin{align*}
\forall \rep\sim p(\rep): ~\left|\det \frac{\partial t(\w, \rep)}{\partial \w} \right|^{-1} = \left|\det \frac{\partial (\w - \mathbf{B} \rep)}{\partial \w}\right|^{-1} & = \left|\det \mathbf{I}\right|^{-1} = 1\,.
\end{align*}

\begin{lemma} \label{lem:equivalence_of_mixture_posterior}
For any distribution $p(\w)$, likelihood approximation $\likeApp_0(\w;\varparams)$, $q_0(\w;\varparams)$ as defined in \eqref{eq:constructed_approximation}, $\qMix(\w;\varparams)$ as defined in \eqref{eq:constructed_approximation:invariance_abiding} and $p(\rep)$, assume there exists a mapping $\varphi:\rep \mapsto \rep^{\prime}$ such that
\begin{align}
\begin{split}
\forall \rep\sim p(\rep):  p(\w) \cdot \likeApp_0(t(\w, \rep); \varparams) = p(t(\w, \varphi(\rep))) \cdot \likeApp_0(t(\w, \varphi(\rep)); \varparams)\,, \label{eq:condition_1a}
\end{split}
\end{align}
as well as 
\begin{equation}
\forall \rep\sim p(\rep): ~\left|\det \frac{\partial t(\w, \rep)}{\partial \w} \right|^{-1} = 1\,. \label{eq:condition_2a} 
\end{equation}
Then, 
\begin{align}
\expect{\w \sim \qMix(\w; \, \varparams)}{\ln p(\Dataset \given \w) } 
&= \expect{\w \sim q_0(\w; \, \varparams)}{\ln p(\Dataset \given \w) }\,.
\end{align}
\end{lemma}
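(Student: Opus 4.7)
The plan is to expand $\expect{\w \sim \qMix}{\ln p(\Dataset\given\w)}$ as a double integral over $\w$ and $\rep$ using the definition of $\qMix$, and then apply the three ingredients at hand --- condition \eqref{eq:condition_1a}, the volume-preservation condition \eqref{eq:condition_2a}, and the likelihood invariance \eqref{eq:likelihood_invariance:definition} --- in sequence to eliminate the dependence on $\rep$ and identify the remaining $\w$-integral with an expectation under $q_0$.

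Concretely, starting from \eqref{eq:constructed_approximation:invariance_abiding} and applying Fubini gives
\begin{align*}
\expect{\w \sim \qMix}{\ln p(\Dataset\given\w)} = \Zmix^{-1}\int p(\rep)\int p(\w)\,\likeApp_0(t(\w,\rep);\varparams)\,\ln p(\Dataset\given\w)\,d\w\,d\rep.
\end{align*}
By \eqref{eq:condition_1a}, inside the inner integral $p(\w)\,\likeApp_0(t(\w,\rep);\varparams)$ can be replaced by $p(t(\w,\varphi(\rep)))\,\likeApp_0(t(\w,\varphi(\rep));\varparams)$. I would then substitute $\tilde{\w} = t(\w,\varphi(\rep))$; condition \eqref{eq:condition_2a}, read off at the point $\varphi(\rep)$, gives $d\tilde{\w} = d\w$, and the likelihood invariance \eqref{eq:likelihood_invariance:definition} gives $\ln p(\Dataset\given\w) = \ln p(\Dataset\given\tilde{\w})$. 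Hence the inner integral becomes $\int p(\tilde{\w})\,\likeApp_0(\tilde{\w};\varparams)\,\ln p(\Dataset\given\tilde{\w})\,d\tilde{\w} = Z_0\,\expect{\w \sim q_0}{\ln p(\Dataset\given\w)}$, independently of $\rep$, so marginalising $p(\rep)$ yields $(Z_0/\Zmix)\,\expect{\w \sim q_0}{\ln p(\Dataset\given\w)}$.

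To finish I would show $\Zmix = Z_0$ by running exactly the same three-step reduction on the definition of $\Zmix$ with the factor $\ln p(\Dataset\given\w)$ replaced by $1$; this gives $\Zmix = Z_0\int p(\rep)\,d\rep = Z_0$, and the claim follows. The main technical obstacle is ensuring $t(\cdot,\varphi(\rep))$ is a well-defined bijection so that the change-of-variables step is legitimate --- this follows from \eqref{eq:condition_2a} together with the group-theoretic structure of the invariances of interest (translations and permutations are globally invertible on $\mathbb{R}^{\dim(\w)}$) --- and checking that Fubini's exchange is applicable, which is routine for the Gaussian prior and Gaussian likelihood approximations considered in the paper. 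A secondary subtlety is that $p(\rep)$ is only a proper distribution in the discrete invariance case; for continuous translation invariance it is the improper uniform limit described in Sec.~\ref{sec:translation_invariance_linear}, so the argument is formally justified by first working with the regularised Gaussian $p(\rep) = \mathcal{N}(\0, \beta^2\mathbf{I})$ and then letting $\beta \to \infty$.
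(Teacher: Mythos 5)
Your proposal is correct and follows essentially the same route as the paper's proof: Fubini, condition \eqref{eq:condition_1a}, a volume-preserving change of variables $\tilde{\w}=t(\w,\varphi(\rep))$, the likelihood invariance $\ln p(\Dataset\given\w)=\ln p(\Dataset\given t(\w,\varphi(\rep)))$, and the identification $\int p(\rep)Z_0(\rep)\,d\rep=\Zmix$ to cancel the normalisation constants (the paper retains the notation $Z_0(\rep)$ where you observe it is constant, but this is the same cancellation). Your added remarks on bijectivity of $t(\cdot,\varphi(\rep))$ and on regularising the improper $p(\rep)$ via $\beta\to\infty$ are sensible points that the paper leaves implicit.
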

\begin{proof}
The lemma can be proven by applying the change-of-variables formula and using the invariance property \eqref{eq:condition_1a} of $\likeApp(\cdot;\varparams)$:
\begin{align}
&\expect{\w \sim \qMix(\w; \, \varparams)}{\ln p(\Dataset \given \w) } \\  
&= \int \overbrace{\int p(\rep) \frac{Z_0(\rep)}{Z_{\mix}} q_0(t(\w, \varphi(\rep));\, \varparams) \, d\rep}^{\qMix(\w)} \, \ln \left[ p(\Dataset \given \w) \right] \,d\w \\
&= \int p(\rep) \frac{Z_0(\rep)}{Z_{\mix}} \int q_0(t(\w, \varphi(\rep));\, \varparams) \, \ln \left[ p(\Dataset \given \w) \right] \,d\w \, d\rep \\
&= \int p(\rep) \frac{Z_0(\rep)}{Z_{\mix}} \int q_0(t(\w, \varphi(\rep));\, \varparams) \, \ln \left[ p(\Dataset \given t(\w, \varphi(\rep)))\right] \,d\w \,d\rep \\
&= \int p(\rep) \frac{Z_0(\rep)}{Z_{\mix}} \int q_0(\w;\, \varparams)\overbrace{\left| \det \frac{\partial t(\w, \varphi(\rep))}{\partial \w} \right|^{-1}}^{1} \, \ln \left[ p(\Dataset \given \w) \right] \,d\w \,d\rep  \\
&= \overbrace{\int p(\rep) \frac{Z_0(\rep)}{Z_{\mix}} d\rep}^{1}  \int q_0(\w;\, \varparams) \ln \left[ p(\Dataset \given \w) \right] \, d\w  \\
&= \mathbb{E}_{\w \sim q_0} \left[ \ln p(\Dataset \given \w) \right].
\end{align}
where the second line uses \eqref{eq:condition_1a}, the third line changes the order of integration (Fubini's theorem), the fourth line uses the invariance property $\ln p(\Dataset \given \w) = \ln p(\Dataset \given t(\w, \varphi(\rep)))$, the fifth line then applies the change of variables theorem together with \eqref{eq:condition_2a}, the sixth line then uses again the invariance property of the log likelihood, and the seventh line notes that the integral over the normalisation constants $Z_0(\rep)$ cancels with the normalisation constant $\Zmix$ of the invariance-abiding posterior, since
\begin{align}
\int p(\rep) Z_0(\rep) d\rep 
&= \int \int p(\rep) \, p(\w) \cdot \likeApp_0(t(\w, \rep)) \, d\w d\rep \\
&= \int p(\w) \cdot \int p(\rep) \, \likeApp_0(t(\w, \rep)) \, d\rep d\w \\
&= \int p(\w) \cdot \likeAppMix(\w) \, d\w 
= \Zmix,
\end{align}
where we changed the integration order, resulting in the normalisation constant of the invariance-abiding likelihood approximation. 
\end{proof}

\begin{lemma}\label{lem:kdl_equality_invariance_gap}
For any distribution $p(\w)$, likelihood approximation $\likeApp_0(\w;\varparams)$, $q_0(\w;\varparams)$ as defined in \eqref{eq:constructed_approximation}, $\qMix(\w;\varparams)$ as defined in \eqref{eq:constructed_approximation:invariance_abiding} and $p(\rep)$, assume there exist two mappings $t:\w\times \rep\mapsto\w^\prime$ and  $\varphi:\rep \mapsto \rep^{\prime}$ such that \eqref{eq:condition_1a} and \eqref{eq:condition_2a} hold. Then,
\begin{align}
\kld{q_0}{p} - \kld{\qMix}{p} = \kld{q_0}{\qMix}\,.
\end{align}
\end{lemma}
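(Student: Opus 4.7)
The plan is to reduce the claimed identity to a posterior-predictive-type equality that can be handled by the same argument as Lemma~\ref{lem:equivalence_of_mixture_posterior}, but with the invariance-abiding likelihood approximation $\likeAppMix$ in the role of the true likelihood.

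First I would use the definitions in \eqref{eq:constructed_approximation} to write
\begin{align*}
\ln q_0(\w;\varparams) &= \ln p(\w) + \ln \likeApp_0(\w;\varparams) - \ln Z_0, \\
\ln \qMix(\w;\varparams) &= \ln p(\w) + \ln \likeAppMix(\w;\varparams) - \ln \Zmix,
\end{align*}
and substitute these into each of the three KL divergences. A straightforward bookkeeping gives
\begin{align*}
\kld{q_0}{p} &= \expect{q_0}{\ln \likeApp_0} - \ln Z_0, \\
\kld{\qMix}{p} &= \expect{\qMix}{\ln \likeAppMix} - \ln \Zmix, \\
\kld{q_0}{\qMix} &= \expect{q_0}{\ln \likeApp_0 - \ln \likeAppMix} - \ln Z_0 + \ln \Zmix,
\end{align*}
so that after cancelling the $\ln \likeApp_0$, $\ln Z_0$, and $\ln \Zmix$ terms, the desired identity $\kld{q_0}{p}-\kld{\qMix}{p}=\kld{q_0}{\qMix}$ is equivalent to the single claim $\expect{\qMix}{\ln \likeAppMix} = \expect{q_0}{\ln \likeAppMix}$.

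Second, I would prove this equality by re-running the proof of Lemma~\ref{lem:equivalence_of_mixture_posterior} with $\ln \likeAppMix(\w;\varparams)$ in place of $\ln p(\Dataset\given\w)$. The only property of $\ln p(\Dataset\given\w)$ that proof used was the invariance $\ln p(\Dataset\given\w)=\ln p(\Dataset\given t(\w,\varphi(\rep)))$, so I need to check that $\likeAppMix$ is itself $t(\cdot,\rep)$-invariant. This is inherited from its definition as a marginal: whenever the transformations $\{t(\cdot,\rep)\}$ form a group under composition and $p(\rep)$ is invariant under left multiplication by this group, the change of variable $\rep''=\rep\cdot\rep'$ inside the inner expectation gives
\[
\likeAppMix(t(\w,\rep)) = \expect{\rep'\sim p}{\likeApp_0(t(t(\w,\rep),\rep'))} = \expect{\rep''\sim p}{\likeApp_0(t(\w,\rep''))} = \likeAppMix(\w).
\]
Both concrete invariances considered in the paper satisfy this: finite permutation groups with the uniform measure, and translations along $\mathbf{B}\bDelta$ with the $\beta\to\infty$ Gaussian that serves as the improper Lebesgue measure. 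With $\likeAppMix$-invariance in hand, the chain of equalities from the proof of Lemma~\ref{lem:equivalence_of_mixture_posterior} transfers verbatim: express $\qMix$ as the $p(\rep)$-mixture of $\tfrac{Z_0(\rep)}{\Zmix}\,q_0(t(\w,\varphi(\rep));\varparams)$ via assumption \eqref{eq:condition_1a}, swap the order of integration, apply the invariance of $\ln\likeAppMix$, change variables using \eqref{eq:condition_2a} (the unit Jacobian), and use that $\int p(\rep)Z_0(\rep)/\Zmix\,d\rep=1$.

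The main technical obstacle I anticipate is the invariance of $\likeAppMix$ itself, because it requires that the pushforward of $p(\rep)$ under the group composition $\rep'\mapsto \rep\cdot \rep'$ equal $p(\rep)$. For discrete permutations this is trivial, but for the continuous translation case it requires the improper-limit interpretation of $p(\bDelta)=\lim_{\beta\to\infty}\mathcal{N}(\bDelta;\0,\beta^2\mathbf{I})$ to be handled carefully so that translating $\bDelta$ by a fixed vector leaves the measure unchanged. Once this invariance is granted, the rest of the argument is a mechanical adaptation of Lemma~\ref{lem:equivalence_of_mixture_posterior} together with the elementary expansion above.
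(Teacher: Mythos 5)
Your proposal is correct and follows essentially the same route as the paper: both reduce the identity to the single claim $\expect{\w\sim\qMix}{\ln \likeAppMix(\w)} = \expect{\w\sim q_0}{\ln \likeAppMix(\w)}$ and establish it by expanding $\qMix$ as a $p(\rep)$-mixture of translated/permuted copies of $q_0$ via \eqref{eq:condition_1a}, swapping integration order, invoking the invariance of $\likeAppMix$, and changing variables with the unit Jacobian from \eqref{eq:condition_2a}. The only difference is that you explicitly justify the invariance $\likeAppMix(t(\w,\varphi(\rep)))=\likeAppMix(\w)$ via the group structure of the transformations and the invariance of $p(\rep)$, whereas the paper asserts this property without proof---a welcome extra bit of care rather than a divergence in method.
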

\begin{proof}
First, note that 
\begin{align}
\kld{q_0}{p} & = \expect{\w \sim q_0}{\ln \left[ Z_0 \, \likeApp_0(\w) \right] }\,, \\
\kld{\qMix}{p} &= \expect{\w \sim \qMix}{\ln  \left[ Z_{\mix} \, \likeAppMix(\w) \right] }\,.
\end{align}
Now, in the latter KL, we expand the distribution $\qMix(\w)$ as in \eqref{eq:condition_1a},
\begin{align}
\kld{\qMix}{p} 
&= \int \overbrace{\int  p(\rep)  \frac{Z_0(\rep)}{Z_{\mix}} q_0(t(\w, \varphi(\rep)) ;\, \varparams) d\rep}^{\qMix(\w)}  \, \ln \left[ Z_{\mix}\, \likeAppMix(\w) \right]  \, d\w  \\
&= \int \int  p(\rep)  \frac{Z_0(\rep)}{Z_{\mix}} q_0(t(\w, \varphi(\rep)) ;\, \varparams) \, \ln \left[ Z_{\mix}\, \likeAppMix(\w) \right]  \, d\rep d\w \\
&= \int \int  p(\rep)  \frac{Z_0(\rep)}{Z_{\mix}} q_0(t(\w, \varphi(\rep)) ;\, \varparams) \, \ln \left[ Z_{\mix}\, \likeAppMix(t(\w, \varphi(\rep))) \right]  \, d\rep d\w \\
&= \int \int p(\rep) \frac{Z_0(\rep)}{Z_{\mix}} q_0(\w;\, \varparams)  \, \overbrace{\left| \det \frac{\partial t(\w, \varphi(\rep))}{\partial \w} \right|^{-1}}^{1} \, \ln \left[ Z_{\mix} \, \likeAppMix(\w) \right] \, d\rep d\w \\
&= \int q_0(\w;\, \varparams) \overbrace{\int p(\rep) \frac{Z_0(\rep)}{Z_{\mix}} \, d\rep}^{1}  \, \ln \left[ Z_{\mix} \, \likeAppMix(\w) \right] d\w \\
&= \int q_0(\w;\, \varparams) \, \ln \left[ Z_{\mix} \, \likeAppMix(\w) \right] \, d\w'\,,
\end{align}
where the third line uses the invariance property of the invariance-abiding likelihood approximation, $\forall \rep: \likeAppMix(t(\w, \varphi(\rep))) = \likeAppMix(\w)$, the change of variables formula is applied to the fourth line for the volume-preserving transformation, and the fifth line re-arranges the integration, noting that the integral over normalisation constants equals one. 

The proposition follows by taking the difference between the regularisation terms corresponding to the respective ELBO objectives:
\begin{align}
\kld{q_0}{p} - \kld{\qMix}{p} & = \expect{\w \sim q_0}{\ln \frac{Z_0 \, \likeApp_0(\w)}{Z_{\mix}\likeAppMix(\w)}} \\
& = \expect{\w \sim q_0}{\ln \frac{Z_0 \, p(\w) \likeApp_0(\w)}{Z_{\mix} p(\w) \likeAppMix(\w)}} = \kld{q_0}{\qMix}\,.
\end{align}
\end{proof}
\section{Translation invariance in linear models}\label{app:canonical_translation_invariant_model}
In this section, we derive the results from Sec.~\ref{sec:translation_invariance_linear} for a Bayesian linear regression with a single input vector $\x$ and corresponding target observation $y$. The result stated in Sec.~\ref{sec:translation_invariance_linear} for $\x = \1$ follows as a special case.

We assume we have $K$ latent variables $\mathbf{w}=\left[w_{1},\ldots,w_{K}\right]^{\text{T}}$
and one observation $\mathbf{x}=\left[x_{1},\ldots,x_{K}\right]^{\text{T}}$.
We further assume that the likelihood $p\left(y,|\mathbf{w},\mathbf{x}\right)$ only depend on their inner product, that is, $\mathbf{x}^{\text{T}}\mathbf{w}$. Then we know that any change in $\mathbf{w}$, which leaves the sum of the elements weighted by $\mathbf{x}$ unaffected, does not change the likelihood.
For example $\mathbf{w}^{\prime}=[w_{1}+\Delta,w_{2}-\Delta\cdot\frac{x_{1}}{x_{2}},w_{3},\ldots,w_{K}]^{\text{T}}$
has the exact same likelihood than $\mathbf{w}=[w_{1},w_{2},w_{3},\ldots,w_{K}]^{\text{T}}$.
In general, we can model this translation invariance using an $K-1$
dimensional vector $\boldsymbol{\Delta}\in\mathbb{R}^{K-1}$ and noting
that 
\[
\mathbf{x}^{\text{T}}\mathbf{w}=\mathbf{x}^{\text{T}}\left(\mathbf{w+}\underbrace{\left[\begin{array}{c}
\mathbf{I}\\
-x_{K}^{-1}\mathbf{x}_{K-1}^{\text{T}}
\end{array}\right]}_{\mathbf{B}}\boldsymbol{\Delta}\right)\,,
\]
where we used $\mathbf{x}_{K-1}:=\left[x_{1},\ldots,x_{K-1}\right]^{\text{T}}$
because 
\[
\mathbf{x}^{\text{T}}\mathbf{B}\boldsymbol{\Delta}=\left[\begin{array}{cc}
\mathbf{x}_{K-1}^{\text{T}} & x_{K}\end{array}\right]\left[\begin{array}{c}
\boldsymbol{\Delta}\\
-x_{K}^{-1}\mathbf{x}_{K-1}^{\text{T}}\boldsymbol{\Delta}
\end{array}\right]=0\,.
\]

\subsection{Likelihood Model}

Now let us assume that we approximate the likelihood by a function
that has Gaussian shape, that is $p\left(y,|\mathbf{w},\mathbf{x}\right)\approx q\left(\mathbf{w}\right)\propto\mathcal{N}\left(\mathbf{w};\mathbf{m},\mathbf{V}\right)$.
In order to model that the likelihood is translation invariant, we
compute the marginal $q\left(\mathbf{w}-\mathbf{B}\boldsymbol{\Delta}\right)$
over $p\left(\boldsymbol{\Delta}\right)=\mathcal{N}\left(\boldsymbol{\Delta};\boldsymbol{0},\beta^{2}\mathbf{I}\right)$
and considering the case of $\beta\rightarrow\infty$ , that is 
\begin{align*}
q_{\beta}\left(\mathbf{w}\right) & :=\int q\left(\mathbf{w}-\mathbf{B}\boldsymbol{\Delta}\right)\cdot p\left(\boldsymbol{\Delta}\right)\,d\boldsymbol{\Delta}\\
 & =\int\mathcal{N}\left(\mathbf{w};\mathbf{m}+\mathbf{B}\boldsymbol{\Delta},\text{\textbf{V}}\right)\cdot\mathcal{N}\left(\boldsymbol{\Delta};\boldsymbol{0},\beta^{2}\mathbf{I}\right)\,d\boldsymbol{\Delta}\,.
\end{align*}
According to Theorem \ref{thm:convolution-of-Gaussians}, for any
$\beta\in\mathbb{R}^{+}$ this is another Gaussian given by 
\begin{align*}
q_{\beta}\left(\mathbf{w}\right) & =\mathcal{N}\left(\mathbf{w};\mathbf{m}+\mathbf{B}\boldsymbol{0},\underbrace{\mathbf{V}+\beta\mathbf{B}\cdot\beta\mathbf{B}^{\text{T}}}_{\mathbf{V}_{\beta}}\right)\\
 & =\mathcal{N}\left(\mathbf{w};\mathbf{m},\mathbf{V}+\beta^{2}\cdot\left[\begin{array}{cc}
\mathbf{I} & -x_{K}^{-1}\mathbf{x}_{K-1}\\
-x_{K}^{-1}\mathbf{x}_{K-1}^{\text{T}} & x_{K}^{-2}\mathbf{x}_{K-1}^{\text{T}}\mathbf{x}_{K-1}
\end{array}\right]\right)\,.
\end{align*}
Note that $\lim_{\beta\rightarrow\infty} q_{\beta}\left(\mathbf{w}\right) = \likeAppMix(\mathbf{w})$ as defined in Subsection \ref{subsec:mean_field_parameterisation}. Using the Woodbury formula in Theorem \ref{thm:woodbury}, the inverse of the covariance can be re-written as
\begin{align*}
\mathbf{V}_{\beta}^{-1} & :=\left(\mathbf{V}+\beta\mathbf{B}\cdot\beta\mathbf{B}^{\text{T}}\right)^{-1}\\
 & =\mathbf{V}^{-1}-\beta^{2}\cdot\mathbf{V}^{-1}\mathbf{B}\left(\mathbf{I}+\beta^{2}\mathbf{B}^{\text{T}}\mathbf{V}^{-1}\mathbf{B}\right)^{-1}\mathbf{B}^{\text{T}}\mathbf{V}^{-1}\\
 & =\mathbf{V}^{-1}-\beta^{2}\cdot\mathbf{V}^{-1}\mathbf{B}\left(\beta^{2}\left(\beta^{-2}\mathbf{I}+\mathbf{B}^{\text{T}}\mathbf{V}^{-1}\mathbf{B}\right)\right)^{-1}\mathbf{B}^{\text{T}}\mathbf{V}^{-1}\\
 & =\mathbf{V}^{-1}-\mathbf{V}^{-1}\mathbf{B}\left(\beta^{-2}\mathbf{I}+\mathbf{B}^{\text{T}}\mathbf{V}^{-1}\mathbf{B}\right)^{-1}\mathbf{B}^{\text{T}}\mathbf{V}^{-1}\,.
\end{align*}

\subsection{Posterior Model}

If we assume a prior $p\left(\mathbf{w}\right)=\mathcal{N}\left(\mathbf{w};\boldsymbol{\mu},\boldsymbol{\Sigma}\right)$,
then the posterior for the Gaussian approximation is given by 
\begin{align*}
p\left(\mathbf{w}|\mathbf{x}\right) & \propto q\left(\mathbf{w}\right)\cdot p\left(\mathbf{w}\right)\\
 & \propto\mathcal{G}\left(\mathbf{w};\mathbf{V}^{-1}\mathbf{m},\mathbf{V}^{-1}\right)\cdot\mathcal{G}\left(\mathbf{w};\boldsymbol{\Sigma}^{-1}\boldsymbol{\mu},\boldsymbol{\Sigma}^{-1}\right)\\
 & =\mathcal{G}\left(\mathbf{w};\mathbf{V}^{-1}\mathbf{m}+\boldsymbol{\Sigma}^{-1}\boldsymbol{\mu},\mathbf{V}^{-1}+\boldsymbol{\Sigma}^{-1}\right)\\
 & =\mathcal{N}\left(\mathbf{w};\boldsymbol{\Sigma}\left(\mathbf{V}+\boldsymbol{\Sigma}\right)^{-1}\mathbf{m}+\mathbf{V}\left(\mathbf{V}+\boldsymbol{\Sigma}\right)^{-1}\boldsymbol{\mu},\mathbf{V}\left(\mathbf{V}+\boldsymbol{\Sigma}\right)^{-1}\boldsymbol{\Sigma}\right)\,,
\end{align*}
where we used the identity $\left(\boldsymbol{\Sigma}^{-1}+\mathbf{V}^{-1}\right)^{-1}=\mathbf{V}\left(\mathbf{V}+\boldsymbol{\Sigma}\right)^{-1}\boldsymbol{\Sigma}=\boldsymbol{\Sigma}\left(\mathbf{V}+\boldsymbol{\Sigma}\right)^{-1}\mathbf{V}$
in the last line. Similarly, if we use the likelihood approximation
which has incorporated the translation invariance, we get 
\begin{align*}
p_{\beta}\left(\mathbf{w}|\mathbf{x}\right) & \propto q_{\beta}\left(\mathbf{w}\right)\cdot p\left(\mathbf{w}\right)\\
 & \propto\mathcal{G}\left(\mathbf{w};\mathbf{V}_{\beta}^{-1}\mathbf{m},\mathbf{V}_{\beta}^{-1}\right)\cdot\mathcal{G}\left(\mathbf{w};\boldsymbol{\Sigma}^{-1}\boldsymbol{\mu},\boldsymbol{\Sigma}^{-1}\right)\\
 & =\mathcal{G}\left(\mathbf{w};\mathbf{V}_{\beta}^{-1}\mathbf{m}+\boldsymbol{\Sigma}^{-1}\boldsymbol{\mu},\mathbf{V}_{\beta}^{-1}+\boldsymbol{\Sigma}^{-1}\right)\\
 & =\mathcal{N}\left(\mathbf{w};\left(\mathbf{V}_{\beta}^{-1}+\boldsymbol{\Sigma}^{-1}\right)^{-1}\left(\mathbf{V}_{\beta}^{-1}\mathbf{m}+\boldsymbol{\Sigma}^{-1}\boldsymbol{\mu}\right),\left(\mathbf{V}_{\beta}^{-1}+\boldsymbol{\Sigma}^{-1}\right)^{-1}\right)
\end{align*}
Observing that $\lim_{\beta\rightarrow\infty}\beta^{-2}\mathbf{I}=\mathbf{0}$
we thus see that 
\begin{align}
    p_{\infty}\left(\mathbf{w}|\mathbf{x}\right) & =\mathcal{N}\left(\mathbf{w};\left(\mathbf{V}_{\mathbf{B}}^{-1}+\boldsymbol{\Sigma}^{-1}\right)^{-1}\left(\mathbf{V}_{\mathbf{B}}^{-1}\mathbf{m}+\boldsymbol{\Sigma}^{-1}\boldsymbol{\mu}\right),\left(\mathbf{V}_{\mathbf{B}}^{-1}+\boldsymbol{\Sigma}^{-1}\right)^{-1}\right)\,,\label{eq:posterior}\\
\mathbf{V}_{\mathbf{B}}^{-1} & :=\mathbf{V}^{-1}-\mathbf{V}^{-1}\mathbf{B}\left(\mathbf{B}^{\text{T}}\mathbf{V}^{-1}\mathbf{B}\right)^{-1}\mathbf{B}^{\text{T}}\mathbf{V}^{-1}\,.\label{eq:V_B}
\end{align}
Note again that $p_{\infty}\left(\mathbf{w}\right)=\qMix \left(\w;\varparams \right)$ as defined in \eqref{eq:translation:posterior:invariance_abiding}.

\subsubsection{Special Case of Diagonal Likelihood Covariance}

Now let us consider the special case where the covariance matrix of the
Gaussian likelihood approximation is a diagonal matrix, that is $\mathbf{V}=\text{Diag}(\boldsymbol{\lambda})$. Then, observe that 
\begin{align}
\mathbf{V}^{-1}\mathbf{B} & =\left[\begin{array}{cc}
\mathbf{V}_{K-1}^{-1} & \mathbf{0}\\
\mathbf{0} & \lambda_{K}^{-1}
\end{array}\right]\left[\begin{array}{c}
\mathbf{I}\\
-x_{K}^{-1}\mathbf{x}_{K-1}^{\text{T}}
\end{array}\right]=\left[\begin{array}{c}
\mathbf{V}_{K-1}^{-1}\\
-\lambda_{K}^{-1}x_{K}^{-1}\mathbf{x}_{K-1}^{\text{T}}
\end{array}\right]\,,\label{eq:V1B}\\
\mathbf{V}^{-1}\mathbf{B}\mathbf{V}_{K-1}\mathbf{x}_{K-1} & = \left(\mathbf{V}^{-1}\mathbf{B}\right)\cdot \mathbf{V}_{K-1}\mathbf{x}_{K-1} =\left[\begin{array}{c}
\mathbf{x}_{K-1}\\
-\lambda_{K}^{-1}x_{K}^{-1}\mathbf{x}_{K-1}^{\text{T}}\mathbf{V}_{K-1}\mathbf{x}_{K-1}
\end{array}\right]\,,\label{eq:V1Bl}\\
\mathbf{B}^{\text{T}}\mathbf{V}^{-1}\mathbf{B} & =\left[\begin{array}{cc}
\mathbf{I} & -x_{K}^{-1}\mathbf{x}_{K-1}\end{array}\right]\left[\begin{array}{c}
\mathbf{V}_{K-1}^{-1}\\
-\lambda_{K}^{-1}x_{K}^{-1}\mathbf{x}_{K-1}^{\text{T}}
\end{array}\right]=\mathbf{V}_{K-1}^{-1}+\lambda_{K}^{-1}x_{K}^{-2}\mathbf{x}_{K-1}\mathbf{x}_{K-1}^{\text{T}}\,,\nonumber 
\end{align}
where we used the notation $\mathbf{V}_{K-1}^{-1}$ to denote the
diagonal $\left(K-1\right)\times\left(K-1\right)$ matrix with all
\[
\boldsymbol{\lambda}_{K-1}:=\left[\lambda_{1}^{-1},\lambda_{2}^{-1},\ldots,\lambda_{K-1}^{-1}\right]^{\text{T}}
\]
 on the diagonal. Thus, using Theorem \ref{thm:woodbury} with $\mathbf{C=}\mathbf{V}_{K-1}^{-1}$,
$\mathbf{A}=x_{K}^{-2}\lambda_{K}^{-1}\mathbf{x}_{K-1}$ and $\mathbf{B}=\mathbf{x}_{K-1}^{\text{T}}$,
we see that 
\begin{align*}
\left(\mathbf{B}^{\text{T}}\mathbf{V}^{-1}\mathbf{B}\right)^{-1} & =\mathbf{V}_{K-1}-\frac{1}{\lambda_{K}x_{K}^{2}+\mathbf{x}_{K-1}^{\text{T}}\mathbf{V}_{K-1}\mathbf{x}_{K-1}}\mathbf{V}_{K-1}\mathbf{x}_{K-1}\mathbf{x}_{K-1}^{\text{T}}\mathbf{V}_{K-1}\\
 & =\mathbf{V}_{K-1}-\frac{1}{\mathbf{x}^{\text{T}}\mathbf{V}\mathbf{x}}\mathbf{V}_{K-1}\mathbf{x}_{K-1}\mathbf{x}_{K-1}^{\text{T}}\mathbf{V}_{K-1}\,,
\end{align*}

\paragraph{Covariance $\left(\mathbf{V}_{\mathbf{B}}^{-1}+\boldsymbol{\Sigma}^{-1}\right)$.}

Thus, for the covariance (\ref{eq:V_B}) of the posterior we have 

\begin{align*}
\mathbf{V}^{-1}- & \mathbf{V}^{-1}\mathbf{B}\left(\mathbf{B}^{\text{T}}\mathbf{V}^{-1}\mathbf{B}\right)^{-1} \mathbf{B}^{\text{T}}\mathbf{V}^{-1} \\ 
 & =\mathbf{V}^{-1}-\mathbf{V}^{-1}\mathbf{B}\left[\mathbf{V}_{K-1}-\frac{1}{\mathbf{x}^{\text{T}}\mathbf{V}\mathbf{x}}\mathbf{V}_{K-1}\mathbf{x}_{K-1}\mathbf{x}_{K-1}^{\text{T}}\mathbf{V}_{K-1}\right]\mathbf{B}^{\text{T}}\mathbf{V}^{-1}\\
 & =\mathbf{V}^{-1}-\underbrace{\mathbf{V}^{-1}\mathbf{B}\mathbf{V}_{K-1}\mathbf{B}^{\text{T}}\mathbf{V}^{-1}}_{S}+\underbrace{\frac{1}{\mathbf{x}^{\text{T}}\mathbf{V}\mathbf{x}}\mathbf{V}^{-1}\mathbf{B}\mathbf{V}_{K-1}\mathbf{x}_{K-1}\mathbf{x}_{K-1}^{\text{T}}\mathbf{V}_{K-1}\mathbf{B}^{\text{T}}\mathbf{V}^{-1}}_{T}\,.
\end{align*}
Let's focus on the expression $S$ first. Using (\ref{eq:V1B}) we
have 
\begin{align}
S & =\left[\begin{array}{c}
\mathbf{V}_{K-1}^{-1}\\
-\lambda_{K}^{-1}x_{K}^{-1}\mathbf{x}_{K-1}^{\text{T}}
\end{array}\right]\mathbf{V}_{K-1}\left[\begin{array}{cc}
\mathbf{V}_{K-1}^{-1} & -\lambda_{K}^{-1}x_{K}^{-1}\mathbf{x}_{K-1}\end{array}\right]\nonumber \\
 & =\left[\begin{array}{cc}
\mathbf{V}_{K-1}^{-1} & -\lambda_{K}^{-1}x_{K}^{-1}\mathbf{x}_{K-1}\\
-\lambda_{K}^{-1}x_{K}^{-1}\mathbf{x}_{K-1}^{\text{T}} & \lambda_{K}^{-2}x_{K}^{-2}\mathbf{x}_{K-1}^{\text{T}}\mathbf{V}_{K-1}\mathbf{x}_{K-1}
\end{array}\right]\,.\label{eq:S}
\end{align}
Similarly, for the expression $T$using (\ref{eq:V1Bl}) we have 
\begin{align}
T & =\frac{1}{\mathbf{x}^{\text{T}}\mathbf{V}\mathbf{x}}\left[\begin{array}{c}
\mathbf{x}_{K-1}\\
-\lambda_{K}^{-1}x_{K}^{-1}\mathbf{x}_{K-1}^{\text{T}}\mathbf{V}_{K-1}\mathbf{x}_{K-1}
\end{array}\right]\left[\begin{array}{cc}
\mathbf{x}_{K-1} & -\lambda_{K}^{-1}x_{K}^{-1}\mathbf{x}_{K-1}^{\text{T}}\mathbf{V}_{K-1}\mathbf{x}_{K-1}\end{array}\right]\nonumber \\
 & =\frac{1}{\mathbf{x}^{\text{T}}\mathbf{V}\mathbf{x}}\left[\begin{array}{cc}
\mathbf{x}_{K-1}\mathbf{x}_{K-1}^{\text{T}} & -\frac{\mathbf{x}_{K-1}^{\text{T}}\mathbf{V}_{K-1}\mathbf{x}_{K-1}}{\lambda_{K}x_{K}}\mathbf{x}_{K-1}\\
-\frac{\mathbf{x}_{K-1}^{\text{T}}\mathbf{V}_{K-1}\mathbf{x}_{K-1}}{\lambda_{K}x_{K}}\mathbf{x}_{K-1}^{\text{T}} & \frac{\left(\mathbf{x}_{K-1}^{\text{T}}\mathbf{V}_{K-1}\mathbf{x}_{K-1}\right)^{2}}{\lambda_{K}^{2}x_{K}^{2}}
\end{array}\right]\,.\label{eq:T}
\end{align}
Putting (\ref{eq:S}) and (\ref{eq:T}) together, we get 
\begin{align}
\mathbf{V}_{\mathbf{B}}^{-1} & =\frac{1}{\mathbf{x}^{\text{T}}\mathbf{V}\mathbf{x}}\left[\begin{array}{cc}
\mathbf{x}_{K-1}\mathbf{x}_{K-1}^{\text{T}} & \left(\frac{\mathbf{x}^{\text{T}}\mathbf{V}\mathbf{x}}{\lambda_{K}x_{K}}-\frac{\mathbf{x}_{K-1}^{\text{T}}\mathbf{V}_{K-1}\mathbf{x}_{K-1}}{\lambda_{K}x_{K}}\right)\mathbf{x}_{K-1}\\
\left(\frac{\mathbf{x}^{\text{T}}\mathbf{V}\mathbf{x}}{\lambda_{K}x_{K}}-\frac{\mathbf{x}_{K-1}^{\text{T}}\mathbf{V}_{K-1}\mathbf{x}_{K-1}}{\lambda_{K}x_{K}}\right)\mathbf{x}_{K-1}^{\text{T}} & \frac{\mathbf{x}^{\text{T}}\mathbf{V}\mathbf{x}}{\lambda_{K}}-\frac{\mathbf{x}^{\text{T}}\mathbf{V}\mathbf{x}\cdot\mathbf{x}_{K-1}^{\text{T}}\mathbf{V}_{K-1}\mathbf{x}_{K-1}}{\lambda_{K}^{2}x_{K}^{2}}+\frac{\left(\mathbf{x}_{K-1}^{\text{T}}\mathbf{V}_{K-1}\mathbf{x}_{K-1}\right)^{2}}{\lambda_{K}^{2}x_{K}^{2}}
\end{array}\right]\nonumber \\
 & =\frac{1}{\mathbf{x}^{\text{T}}\mathbf{V}\mathbf{x}}\left[\begin{array}{cc}
\mathbf{x}_{K-1}\mathbf{x}_{K-1}^{\text{T}} & x_{K}\mathbf{x}_{K-1}\\
x_{K}\mathbf{x}_{K-1}^{\text{T}} & x_{K}^{2}
\end{array}\right]\nonumber \\
 & =\frac{1}{\mathbf{x}^{\text{T}}\mathbf{V}\mathbf{x}}\mathbf{x}\mathbf{x}^{\text{T}}\,,\label{eq:VB_inverse}
\end{align}
where we repeatedly used that $\mathbf{x}^{\text{T}}\mathbf{V}\mathbf{x}-\mathbf{x}_{K-1}^{\text{T}}\mathbf{V}_{K-1}\mathbf{x}_{K-1}=\lambda_{K}x_{K}^{2}$. Thus, the covariance in (\ref{eq:posterior}) can be written as 
\begin{align}
\left(\mathbf{V}_{\mathbf{B}}^{-1}+\boldsymbol{\Sigma}^{-1}\right)^{-1} & =\left(\boldsymbol{\Sigma}^{-1}+\frac{1}{\mathbf{x}^{\text{T}}\mathbf{V}\mathbf{x}}\mathbf{x}\mathbf{x}^{\text{T}}\right)^{-1}\nonumber \\
 & =\boldsymbol{\Sigma}-\frac{1}{\mathbf{x}^{\text{T}}\mathbf{V}\mathbf{x}}\cdot\frac{1}{1+\left(\mathbf{x}^{\text{T}}\mathbf{V}\mathbf{x}\right)^{-1}\mathbf{x}^{\text{T}}\boldsymbol{\Sigma}\mathbf{x}}\cdot\boldsymbol{\Sigma}\mathbf{x}\mathbf{x}^{\text{T}}\boldsymbol{\Sigma}\nonumber \\
 & =\boldsymbol{\Sigma}-\frac{1}{\mathbf{x}^{\text{T}}\left(\mathbf{V}+\boldsymbol{\Sigma}\right)\mathbf{x}}\cdot\left(\boldsymbol{\Sigma}\mathbf{x}\right)\left(\boldsymbol{\Sigma}\mathbf{x}\right)^{\text{T}}\,,\label{eq:compact_covariance}
\end{align}
where we used Theorem \ref{thm:woodbury} in the third step. 
Note that \eqref{eq:translation:likelihood_precision} is a special case of \eqref{eq:VB_inverse} when using $\mathbf{x}=\1$ and observing that $\V\1 = \boldsymbol{\lambda}$. 
Similarly, the covariance in \eqref{eq:translation:posterior:invariance_abiding} is a special case of (\ref{eq:compact_covariance}) when further noticing that $\boldsymbol{\Sigma}\1 = \boldsymbol{\sigma}^2$.

\paragraph{Mean $\left(\mathbf{V}_{\mathbf{B}}^{-1}+\boldsymbol{\Sigma}^{-1}\right)^{-1}\left(\mathbf{V}_{\mathbf{B}}^{-1}\mathbf{m}+\boldsymbol{\Sigma}^{-1}\boldsymbol{\mu}\right)$.}

In order to derive an efficient update for the mean of the posterior,
please note that by virtue of \eqref{eq:VB_inverse}, $\mathbf{V}_{\mathbf{B}}^{-1}$
can be written as $\mathbf{dd}^{\text{T}}$with $\mathbf{d}=\left(\mathbf{x}^{\text{T}}\mathbf{V}\mathbf{x}\right)^{-\frac{1}{2}}\cdot\mathbf{x}$.
Thus, using \eqref{eq:compact_covariance} we have
\begin{align*}
&\left(\mathbf{V}_{\mathbf{B}}^{-1}+\boldsymbol{\Sigma}^{-1}\right)^{-1} \left(\mathbf{V}_{\mathbf{B}}^{-1}\mathbf{m}+\boldsymbol{\Sigma}^{-1}\boldsymbol{\mu}\right) \\
 & =\left(\boldsymbol{\Sigma}-\frac{1}{\mathbf{x}^{\text{T}}\left(\mathbf{V}+\boldsymbol{\Sigma}\right)\mathbf{x}}\cdot\left(\boldsymbol{\Sigma}\mathbf{x}\right)\left(\boldsymbol{\Sigma}\mathbf{x}\right)^{\text{T}}\right)\left(\mathbf{d}\mathbf{d}^{\text{T}}\mathbf{m}+\boldsymbol{\Sigma}^{-1}\boldsymbol{\mu}\right)\\
 & =\boldsymbol{\Sigma}\mathbf{d}\mathbf{d}^{\text{T}}\mathbf{m}+\boldsymbol{\mu}-\frac{1}{\mathbf{x}^{\text{T}}\left(\mathbf{V}+\boldsymbol{\Sigma}\right)\mathbf{x}}\cdot\left(\boldsymbol{\Sigma}\mathbf{x}\right)\left(\boldsymbol{\Sigma}\mathbf{x}\right)^{\text{T}}\mathbf{d}\mathbf{d}^{\text{T}}\mathbf{m}-\frac{1}{\mathbf{x}^{\text{T}}\left(\mathbf{V}+\boldsymbol{\Sigma}\right)\mathbf{x}}\cdot\left(\boldsymbol{\Sigma}\mathbf{x}\right)\left(\boldsymbol{\Sigma}\mathbf{x}\right)^{\text{T}}\boldsymbol{\Sigma}^{-1}\boldsymbol{\mu}\\
 & =\boldsymbol{\mu}+\left(\frac{\mathbf{x}^{\text{T}}\mathbf{m}}{\mathbf{x}^{\text{T}}\mathbf{V}\mathbf{x}}\right)\cdot\left(\boldsymbol{\Sigma}\mathbf{x}\right)-\frac{\left(\boldsymbol{\Sigma}\mathbf{x}\right)^{\text{T}}\mathbf{d}\mathbf{d}^{\text{T}}\mathbf{m}}{\mathbf{x}^{\text{T}}\left(\mathbf{V}+\boldsymbol{\Sigma}\right)\mathbf{x}}\cdot\left(\boldsymbol{\Sigma}\mathbf{x}\right)-\frac{\left(\boldsymbol{\Sigma}\mathbf{x}\right)^{\text{T}}\boldsymbol{\Sigma}^{-1}\boldsymbol{\mu}}{\mathbf{x}^{\text{T}}\left(\mathbf{V}+\boldsymbol{\Sigma}\right)\mathbf{x}}\cdot\left(\boldsymbol{\Sigma}\mathbf{x}\right)\\
 & =\boldsymbol{\mu}+\left[\frac{\mathbf{x}^{\text{T}}\mathbf{m}}{\mathbf{x}^{\text{T}}\mathbf{V}\mathbf{x}}-\frac{\mathbf{x}^{\text{T}}\mathbf{m}}{\mathbf{x}^{\text{T}}\mathbf{V}\mathbf{x}}\cdot\frac{\mathbf{x}^{\text{T}}\boldsymbol{\Sigma}\mathbf{x}}{\mathbf{x}^{\text{T}}\left(\mathbf{V}+\boldsymbol{\Sigma}\right)\mathbf{x}}-\frac{\mathbf{x}^{\text{T}}\boldsymbol{\mu}}{\mathbf{x}^{\text{T}}\left(\mathbf{V}+\boldsymbol{\Sigma}\right)\mathbf{x}}\right]\cdot\left(\boldsymbol{\Sigma}\mathbf{x}\right)\\
 & =\boldsymbol{\mu}+\left(\frac{\mathbf{x}^{\text{T}}\left(\mathbf{m}-\boldsymbol{\mu}\right)}{\mathbf{x}^{\text{T}}\left(\mathbf{V}+\boldsymbol{\Sigma}\right)\mathbf{x}}\right)\cdot\left(\boldsymbol{\Sigma}\mathbf{x}\right)\,.
\end{align*}
Thus, the location parameter in \eqref{eq:translation:posterior:invariance_abiding} is a special case of this more general result when using $\x=\1$ and noticing again that $\boldsymbol{\Sigma}\1 = \boldsymbol{\sigma}^2$ and $\V\1 = \boldsymbol{\lambda}$, respectively.
It can be seen that the Gaussian product updates the prior location in the direction $\boldsymbol{\Sigma} \x$. 
This is because the likelihood is translation invariant wrt.\ all directions perpendicular to $\x$ (i.e.\ the hyper plane determined by the normal vector $\x$).

The two posterior approximations $q_0$ and $\qMix$ as well as the corresponding likelihood approximation is visualised in Fig.~\ref{fig:translation:gaussian_approximations} for two different parametrisations (cf.~Sec.~\ref{sec:translation:linear:comparison_optimal}).

\begin{figure}\centering
    \begin{subfigure}[t]{0.48\textwidth}\centering
		\includegraphics[width=\textwidth]{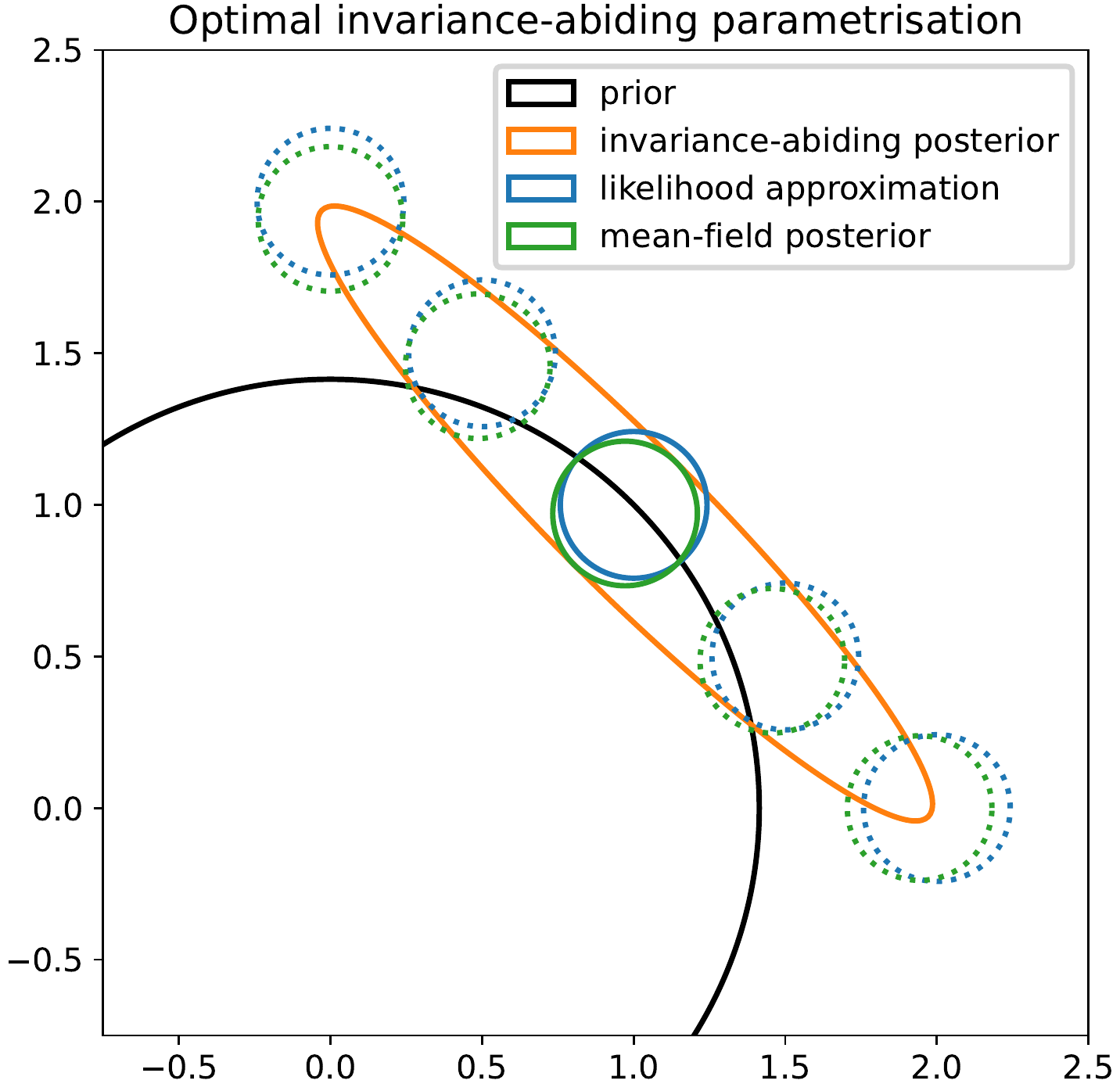}
		\caption{Optimal parameters $\varparams_\mix^*$, cf.~\eqref{eq:canonical_transinvariant:optimum:inf}}
		\label{fig:translation:gaussian_approximations:inf}
	\end{subfigure}
	\hfill
	\begin{subfigure}[t]{0.48\textwidth}\centering
		\includegraphics[width=\textwidth]{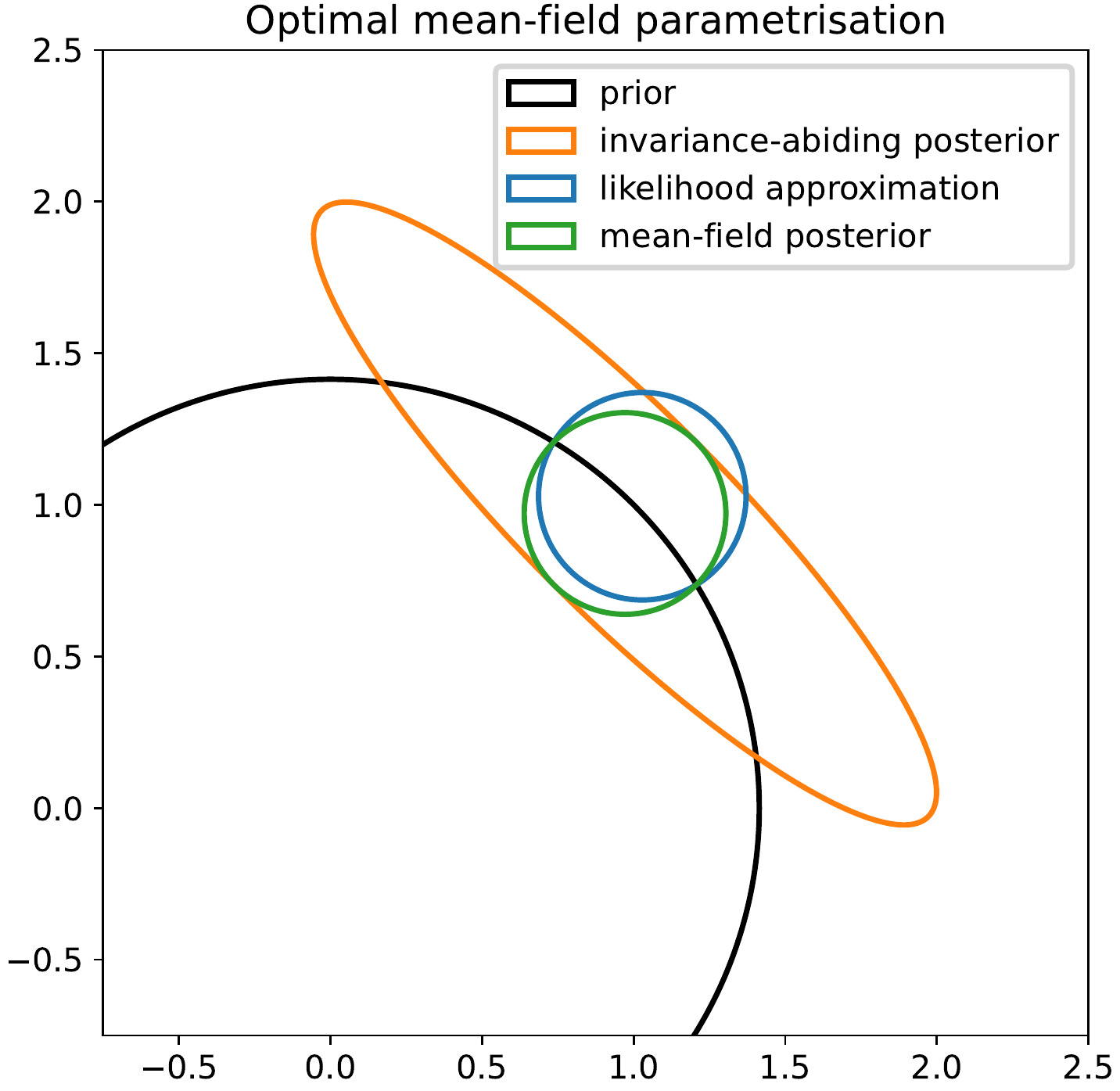}
		\caption{Optimal parameters $\varparams_0^*$, cf.~\eqref{eq:canonical_transinvariant:optimum:0}}
		\label{fig:translation:gaussian_approximations:mf}
	\end{subfigure}
    \caption{
    Gaussian likelihood and posterior approximations $q_0$ and $\qMix$ (see \eqref{eq:translation:posterior:invariance_abiding}) with different parameter optima (cf.~\eqref{eq:canonical_transinvariant:optimum}). 
    The dotted circles show alternative parameter values that induce the same predictive distribution but do not correspond to one of the optima in \eqref{eq:canonical_transinvariant:optimum}.
    }
    \label{fig:translation:gaussian_approximations}
\end{figure}

\subsection{True posterior and optimal invariance-abiding parameters}
\label{app:canonical_translation_invariant_model:true_post_and_abiding}
For the linear model with a single observation $y$ and inputs $\x$, the true posterior $p(\w \given \x, y)$ follows from the standard Bayesian update equation for Gaussian linear models (cf.~\eqref{eq:conditional} in App.~\ref{app:convolution_gaussian} with $\mathbf{A} = \frac{1}{K} \cdot \x^{\text{T}}$):
\begin{align}\label{eq:true_posterior:1}
\begin{split}
p(\w \given y, \x) &= \mathcal{N} \left(\w; \mathbf{m}_p^*, \mathbf{V}_p^* \right), ~~
\mathbf{V}_p^* = \left( \frac{1}{K^2 \sigma^2_y} \x \x^{\text{T}} + \boldsymbol{\Sigma}^{-1} \right)^{-1},  ~~
\mathbf{m}_p^* = \mathbf{V}_p^* \frac{y}{K \sigma^2_y} \x \, .
\end{split}
\end{align}
For $N$ observations $Y:=\{y^{(n)} \}_{n=1}^{N}$ with identical input $\x$, the posterior is
\begin{align}\label{eq:true_posterior:N}
\begin{split}
p\left(\w \given Y, \x\right) &= \mathcal{N} \left(\w; \mathbf{m}_p^*, \mathbf{V}_p^* \right), 
\mathbf{V}_p^* = \left( \frac{N}{K^2 \sigma^2_y} \x \x^{\text{T}} + \boldsymbol{\Sigma}^{-1} \right)^{-1},  
\mathbf{m}_p^* = \mathbf{V}_p^*  \frac{\sum_{n=1}^{N} y^{(n)}}{K \sigma^2_y} \x \, .
\end{split}
\end{align}
We want to relate the true posterior \eqref{eq:true_posterior:N} to the parameters of the invariance-abiding posterior $\qMix(\w; \varparams)$. 
It suffices to consider the special case $\x = \1$ from the main text. We will use the form 
\begin{equation}
\qMix(\w) = \mathcal{N}\left(\w; 
\left( \boldsymbol{\Sigma}^{-1} + \mathbf{V}_{\mix}^{-1} \right)^{-1} \left( \boldsymbol{\Sigma}^{-1} \boldsymbol{\mu} + \mathbf{V}_{\mix}^{-1} \mathbf{m}_{\mix} \right),
\boldsymbol{\Sigma}^{-1} + \mathbf{V}_{\mix}^{-1} 
\right)
\end{equation}

For the precision matrix $\boldsymbol{\Sigma}^{-1} + \mathbf{V}_{\mix}^{-1}$, using the form in \eqref{eq:VB_inverse} with $\x = \1$ gives
\begin{equation}
\mathbf{V}_{\mix}^{-1} 
= \frac{1}{\x^{\text{T}}\mathbf{V}\x}\x \x^{\text{T}}
= \frac{1}{\1^{\text{T}}\boldsymbol{\lambda}}\1 \1^{\text{T}},
\end{equation}
since $\mathbf{V}\1 = \boldsymbol{\lambda}$.
Comparing this to \eqref{eq:true_posterior:N}, we see that $\boldsymbol{\Sigma}^{-1} + \mathbf{V}_{\mix}^{-1}$ has the same structure as the precision matrix of the true posterior. 
By setting the optimal variances as $\1^{\text{T}} \boldsymbol{\lambda}^* = \frac{K^2 \sigma^2_y}{N}$, and using $\frac{K^2 \sigma^2_y}{N} = \frac{K \sigma^2_y}{N} \cdot \1^{\text{T}}\1$, we see that one possible choice for the optimal variance parameters is
\begin{equation}
\boldsymbol{\lambda}^* = \frac{K \sigma^2_y}{N} \cdot \1.
\end{equation}
We note that other vectors that are not proportional to $\1$ and also sum to $\frac{K^2 \sigma^2_y}{N}$ are also valid optima. 

Similarly, with $\boldsymbol{\mu} = \0$, and by noting that $\boldsymbol{\Sigma}^{-1} + \mathbf{V}_{\mix}^{-1} = \mathbf{V}_p^{-1}$, we see that 
\begin{equation}
\mathbf{m}^* = \frac{1}{N} \sum_{n=1}^{N} y^{(n)} \cdot \1.
\end{equation}

\section{Permutation invariance in Bayesian neural networks}\label{app:permutation_invariance_bnns}
Here we analyse the \emph{permutation} invariance in BNNs. This invariance is independent of the data, persisting for any dataset size.

We first describe the set of permutation matrices corresponding to the transformations that leave the likelihood invariant, i.e.\ $t(\w, \rep) = \Perm_{\rep} \w$. We ignore the biases for simplicity and describe these permutations first on a node/neuron level, then layer-wise for the weight matrices, and finally on the weight vector that is obtained by stacking all weight matrices. We will denote layers with indices $l$ and the number of layers by $L$. Layer-wise permutation matrices are then denoted as $\PermLayerwise_l$ and the corresponding weight matrices are denoted as $\mathbf{W}_l$. The permutation matrix that results when stacking all weights matrices into a vector $\w$ is denoted as $\Perm$. 
When necessary, one particular permutation matrix from the set of all possible permutation matrices for a given architecture is indexed with superscript $^{(i)}$, i.e.\  $\PermSet = \{\Perm^{(i)} \}_i$ and $| \PermSet | = \prod_{l=1}^{L-1} k_l !$, where $k_l$ is the number of nodes in layer $l$. 

\paragraph{Node permutations.}
Each hidden layer $\z_l \in \z_1, \dots, \z_{L-1}$ is a set of nodes that can be permuted in $k_l!$ possible ways, relabelling the corresponding parameters attached to these nodes. Each of these $k_l!$ permutations per layer can be combined with any of the permutations of another layer. Each permutation matrix $\PermLayerwise_l$ for a layer $l$ corresponds to one of the unique orderings of nodes $\z_l$. For instance, the permutation matrix that reorders the first 3 nodes as $(z_{l, 3}, z_{l,1}, z_{l,2})$ is
\begin{equation}
\PermLayerwise_l
= 
\begin{pmatrix}
0 & 0 & 1 & \dots \\ 
1 & 0 & 0 & \dots \\ 
0 & 1 & 0 & \dots \\ 
\dots & \dots & \dots & \dots \\ 
\end{pmatrix}.
\end{equation}
The remaining entries in the matrix are ones on the diagonal and zeros elsewhere.

\paragraph{Layer-wise permutation of weight matrices.}
Next, we describe how node permutations correspond to permutations of weight \emph{matrices} in terms of permutations to the in- and outgoing weights. For one particular instance of permutations (omitting superscript $^{(i)}$) to each of the hidden layers, the corresponding weight matrices can be permuted as follows:
\begin{align}
\forall l \in 1, \dots, L: \quad 
\w_l' = \PermLayerwise_{l} \W_l \PermLayerwise^{\text{T}}_{l-1},  
\quad \PermLayerwise_0 = \PermLayerwise_{L} = \mathbf{I}.
\end{align}
The identities corresponding to the first and last layers is because only hidden layer nodes can be permuted but not the data itself. Note also that each permutation matrix is applied to two weight matrices, since permutations to nodes of a particular layer correspond to the simultaneous permutation of the weight matrices from the preceding and subsequent layer.

\paragraph{Permutation of stacked weight vectors.}
The layer-wise formulation can be written in terms of the stacked weight vector $\w = [\mathrm{vec}(\W_1), \dots, \mathrm{vec}(\W_{L+1})]^{\text{T}}$, using $\mathrm{vec}(ABC) = \left( C^{\text{T}} \otimes A \right) \mathrm{vec}(B)$:
\begin{equation}
\mathrm{vec}(\W_l') 
=\left( \PermLayerwise_{l-1} \otimes \PermLayerwise_{l} \right) \mathrm{vec}(\W_l)
=: \Perm_{l,l-1} \mathrm{vec}(\W_l),
\end{equation}
where we denote the new permutation matrix that is applied to the vectorised weights with a bar and the subscripts correspond to the two successive layers. The overall permutation matrix corresponding to the entire weight vector $\w$ is given by forming the block-diagonal matrix 
\begin{equation}
\Perm = 
\begin{pmatrix}
\Perm_{1,0} & \0 & \0 & \dots \\ 
\0 & \Perm_{2,1}  & \0 & \dots \\ 
\0 & \0 & \Perm_{3,2}  & \dots \\ 
\dots & \dots & \dots & \dots \\ 
\end{pmatrix},
\end{equation}
where each $\-$ denotes block-matrices of zeros with the respective dimensions and the remaining parts of the matrix are the identity on the diagonal and zeros elsewhere.

\paragraph{Invariance gap.}
Note again that the permutation invariance for BNNs is independent of the data. The invariance gap $\kld{q_0}{\qMix}$ takes values in the range $[\,0, \, \ln |\PermSet| \,]$, where $|\PermSet|$ is the factorial number of modes. The gap takes the maximal value when each mode is completely separated from the other modes, and the gap is zero when both $q_0(\w)$ and $\qMix(\w)$ are identical. This is the case e.g.\ if $q_0(\w)$ reverts to the prior $p(\w)$ since all modes are then identical, i.e.\ $q_0(\mathbf{P} \w) = q(\w)$.
\section{Data-related bound on the mean-field KL divergence}\label{app:kl_bound}
Assume the regression setting described in Sec.~\ref{sec:data_bound}, where we assume a finite dataset $\mathcal{D} = \{(\x^{(n)}, y^{(n)}) \}_{n=1}^N$ and a regression setting with fixed homogeneous noise variance $\sigma^2_y$. We can further assume that the prior is chosen such that it induces a reasonably bounded output variance of the neural network, $\sigma^2_{L}\big(\x^{(n)} \big) := \variance{\w \sim p}{f(\x^{(n)};\, \w)}$, for some fixed input $\x^{(n)}$. We will then have a finite expected log-likelihood and the ELBO for a mean-field variational approximation $q_{\varparams}$ is 
\begin{align}\label{eq:app:bound:elbo}
\ELBO\left(q_{\varparams}, \mathcal{D}\right) = 
\underbrace{
\sum_{n=1}^{N} \expect{\w \sim q_{\varparams}}{\ln p(y^{(n)} \given \x^{(n)}, w)}
}_{\ELL\left(q_{\varparams}, \mathcal{D} \right)} 
- \KL \left[ q_{\varparams} \divergence p \right].
\end{align} 
Let us now consider a hypothetical \emph{worst-case} fit in terms of the ELBO objective. This is when the data is completely ignored with $q_{\varparams}(\w) = p(\w)$, as any worse fit could be trivially improved by setting the approximation to the prior. This gives then the inequality (using~\eqref{eq:app:bound:elbo})
\begin{equation}
\label{eq:ELBO_worst_case_bound}
\begin{split}
\forall q_{\varparams}:\quad  \ELBO\left(q_{\varparams}, \mathcal{D} \right) \geq \ELBO \left(p, \mathcal{D} \right)  
~~ \Leftrightarrow ~~ 
\ELL\left( q_{\varparams}, \mathcal{D} \right) - \kld{q_{\varparams}}{p} 
\geq \ELL\left(p, \mathcal{D} \right).
\end{split}
\end{equation}
Although we can not compute the expected log-likelihood for an arbitrary fit $q_{\varparams}$, we can quantify the upper bound given above, by considering the \emph{best-case} fit $q^*$, i.e.\ a hypothetical optimum where the data is perfectly predicted up to the known observation noise variance $\sigma^2_y$. The resulting bound is then
\begin{align}\label{eq:app:bound:general}
\kld{q_{\varparams}}{p} 
&\leq  \ELL\left(q^*, \mathcal{D} \right) - \ELL\left(p, \mathcal{D} \right).
\end{align}
Next, we compute the two expected log-likelihood terms in \eqref{eq:app:bound:general}. We first consider the worst-case, where we have
\begin{align}
\ELL(p, \mathcal{D}) 
&= \sum_{n=1}^{N} \mathbb{E}_{\w \sim p}\left[\ln p(y^{(n)} | \x^{(n)}, \w) \right] \\
&= \sum_{n=1}^{N} - \frac{1}{2 \sigma^2_y} \mathbb{E}_{\w \sim p}\left[\left( y^{(n)} - z_{L}\right)^{2} \right]  -\frac{N}{2} \ln \left[ 2\pi \sigma^2_y \right],
\end{align}
where $\z_{L}$ is the noisy output of the neural network given by
\begin{align}
z_{L} &= f(\x^{(n)};\, \w) + \epsilon, ~~~ \w \sim p(\w), ~~~ \epsilon \sim \mathcal{N}\left(0, \, \sigma^2_y \right). 
\end{align}
Splitting the expectation of the quadratic term into variance and square of the expectation, we have
\begin{align}
\mathbb{E}_{\w \sim p}\left[\left( y^{(n)} - z_{L} \right)^{2} \right]
&= \mathbb{E}_{\w \sim p} \left[y^{(n)} -  z_{L} \right]^2
+ \variance{\w \sim p}{y^{(n)} - z_{L}} 
\end{align}
Since the last layer computes $f(\x^{(n)};\, \w) = \w^{\text{T}}_{L, 1} \z_{L-1}$ in the regression setting and the prior weights are independent of $\z_{L-1}$ and centred at zero, i.e.\ $\expect{\w \sim p}{\w_{L,1}} = \0$, it follows that
\begin{align}
\expect{\w \sim p}{ y^{(n)} - z_{L}} &= y^{(n)}. \\
\variance{\w \sim p}{y^{(n)} - z_{L}}
&= \expect{\w \sim p}{z_{L}^{2}} = \sigma^2_{L}\big(\x^{(n)} \big) + \sigma^2_y,
\end{align}
where we denote the variance of the neural network outputs by $\sigma^2_{L}\big(\x^{(n)} \big) := \variance{\w \sim p}{f(\x^{(n)};\, \w)}$.

Hence, the expected log likelihood under the prior is
\begin{align}\label{eq:app:bound_kl_worst_case}
\ELL(p, \mathcal{D}) 
&= \sum_{n=1}^{N} - \frac{1}{2 \sigma^2_y} 
\left[ 
\sigma^2_{L}\big(\x^{(n)} \big) + \sigma^2_y + \big(y^{(n)}\big)^{2} 
\right]
-\frac{N}{2} \ln \left[ 2\pi \sigma^2_y \right]  \\
&= - \frac{1}{2}\sum_{n=1}^{N} \frac{\sigma^2_{L} \big(\x^{(n)}\big) + \big(y^{(n)}\big)^{2}}{\sigma^2_y} 
-\frac{N}{2} \left(1 + \ln \left[ 2\pi \sigma^2_y \right] \right).
\end{align}

For the best-case fit in terms of the ELBO, we assume that we completely overfit and perfectly predict the data by putting the mean of the Gaussian exactly on the data, i.e.\ 
\begin{equation*}
p(y^{(n)} | \x^{(n)}, \w) = \mathcal{N}(y; y^{(n)}, \sigma^2_y). 
\end{equation*}
Then, we have 
\begin{align}
\expect{\w \sim q^*}{y^{(n)} - z_{L}} &= 0, \\
\variance{\w \sim q^*}{y^{(n)} - z_{L}} &= \sigma^2_y.
\end{align}
Hence, the expected log likelihood of the best case fit is
\begin{align}\label{eq:app:bound_kl_best_case}
\ELL(q^*, \mathcal{D}) 
&= \sum_{n=1}^{N} \expect{\w \sim q^*}{\ln p(y^{(n)} \given \x^{(n)}, \w)} \\
&= \sum_{n=1}^{N} - \frac{1}{2 \sigma^2_y} 
\left[ \sigma^2_y \right]
-\frac{N}{2} \ln \left[ 2\pi \sigma^2_y \right] \\
&= -\frac{N}{2} \left(1 + \ln \left[ 2\pi \sigma^2_y \right] \right).
\end{align}

Taking the difference between \eqref{eq:app:bound_kl_worst_case} and \eqref{eq:app:bound_kl_best_case}, we obtain the result in \eqref{eq:bound_kl_regression},
\begin{equation*}
\kld{q_{\varparams}}{p} 
\leq \sum_{n=1}^{N} \frac{\sigma^2_{L} \big(\x^{(n)}\big) + \big(y^{(n)}\big)^{2}}{2 \sigma^2_y}.
\end{equation*}  

\end{document}